\providecommand{\algorithmname}{Algorithm}
\theoremstyle{plain}
\newtheorem{theorem}{Theorem}
\newtheorem{lemma}{Lemma}[section]
\newtheorem{claim}[lemma]{Claim}
\newtheorem{conjecture}[lemma]{Conjecture}
\newtheorem{definition}[lemma]{Definition}
\newtheorem{corollary}[theorem]{Corollary}
\def\XX{ X }
\def\AA{ A }
\def\N{{\mathbb {N}}}
\def\Reals{{\mathbb {R}}}
\newcommand{\Ex}{\mathop{\bf E\/}}
\def\U{{\mathcal {U}}}
\def\T{{\mathcal {T}}}
\def\P{{\mathbb {P}}}
\newcommand{\norm}[1]{\left\lVert#1\right\rVert}
\newcommand{\inner}[1]{\langle#1\rangle}
\def\Bad{\mathrm{Bad}}
\def\Sig{\mathrm{Sig}}
\newcommand{\kk}{k}
\newcommand{\rr}{r}
\newcommand{\F}{\mathbb{F}}
\newcommand{\vareps}{\varepsilon}
\title{Memory-Sample Lower Bounds for Learning Parity with Noise}
\author{Sumegha Garg\thanks{sumegha.garg@gmail.com. Department of Computer Science, Harvard University. Research supported by Michael O. Rabin Postdoctoral Fellowship.}
\and
Pravesh K. Kothari\thanks{
kotpravesh@gmail.com. Department of Computer Science, Carnegie Mellon University. Research supported by NSF CAREER Award \#2047933.}
\and     Pengda Liu%
\thanks{pengda@stanford.edu. Department of Computer Science, Stanford University.}
    \and
    Ran Raz%
\thanks{ran.raz.mail@gmail.com. Department of Computer Science, Princeton University. Research supported by the Simons Collaboration on Algorithms and Geometry, by a Simons Investigator Award and by the National Science Foundation grants No. CCF-1714779, CCF-2007462.}
    }
\date{}
\begin{document}
\maketitle

\begin{abstract}

In this work, we show, for the well-studied problem of learning parity under noise, where a learner tries to learn $x=(x_1,\ldots,x_n) \in \{0,1\}^n$ from a stream of random linear equations over $\F_2$ that are correct with probability $\frac{1}{2}+\vareps$ and flipped with probability $\frac{1}{2}-\vareps$ ($0<\vareps< \frac{1}{2}$), that any learning algorithm requires either a memory of size $\Omega(n^2/\vareps)$ or an exponential number of samples. 

In fact, we study memory-sample lower bounds for a large class of learning problems, as characterized by~\cite{GRT18}, when the samples are noisy. 
A matrix $M: \AA \times \XX \rightarrow \{-1,1\}$ corresponds to the following learning problem with error parameter $\vareps$: an unknown element $x \in \XX$ is chosen uniformly at random. A learner tries to learn $x$ from a stream of samples,
$(a_1, b_1), (a_2, b_2) \ldots$, where for every $i$, $a_i \in \AA$ is
chosen uniformly at random and $b_i = M(a_i,x)$ with probability $1/2+\vareps$ and $b_i = -M(a_i,x)$ with probability $1/2-\vareps$ ($0<\vareps< \frac{1}{2}$).
Assume that $k,\ell, r$ are such that any submatrix of $M$ of at least $2^{-k} \cdot |A|$ rows and at least
$2^{-\ell} \cdot |X|$ columns, has a bias of at most $2^{-r}$.
We show that
any learning algorithm for the learning problem corresponding to $M$, with error parameter $\varepsilon$, requires either a memory
of size at least
$\Omega\left(\frac{k \cdot  \ell}{\vareps} \right)$, or at least $2^{\Omega(r)}$ samples.
The result holds even if the learner has an exponentially small success probability (of $2^{-\Omega(r)}$). In particular, this shows that for a large class of learning problems, same as those in~\cite{GRT18}, any learning algorithm requires either a memory
of size at least
$\Omega\left(\frac{(\log |X|) \cdot  (\log |A|)}{\vareps}\right)$  or an exponential number of noisy samples.

Our proof is based on adapting the arguments in~\cite{Raz17,GRT18} to the noisy case.

%

\end{abstract}
\thispagestyle{empty}
\clearpage
\setcounter{page}{1}

\section{Introduction}
In this work, we study the number of samples needed for learning under noise and memory constraints. The study of the resources needed for learning, under
memory constraints
was initiated by Shamir~\cite{Shamir} and
Steinhardt, Valiant and Wager~\cite{SVW}, and has been studied in the streaming setting.
In addition to being a natural question in learning theory and complexity theory, lower bounds in this model also have direct applications to bounded storage cryptography~\cite{Raz16, VV, KRT, Stefano18,Zhandry19,  Stefano19, Stefano20,GZ21}. \cite{SVW} conjectured that any algorithm for learning parities of size $n$ (that is, learning $x\in\{0,1\}^n$ from a stream of random linear equations in $\F_2$)
requires either a memory
of size $\Omega(n^2)$ or an exponential number of samples.
This conjecture was proven in~\cite{Raz16} and in follow up works, this was generalized to learning sparse parities in \cite{KRT} and more general learning problems in~\cite{Raz17,MM17,MT,GRT18,BGY18,DS,MM2,SSV19,GRT19,dks19,girish20}. 

In this work, we extend this line of work to \emph{noisy} Boolean function learning problems. In particular, we consider the well-studied problem of learning parity under noise (LPN). In this problem, a learner wants to learn $x\in\{0,1\}^n$ from independent and uniformly random linear equations in $\F_2$ where the right hand sides are obtained by independently flipping the evaluation of an unknown parity function with probability $\frac{1}{2}-\vareps$. Learning Parity with Noise (LPN) is a central problem in Learning and Coding Theory (often referred to as decoding random linear codes) and has been extensively studied. Even without memory constraints, coming up with algorithms for the problem has proven to be challenging and the current state-of-the-art for solving the problem is still the celebrated work of Blum, Kalai and Wasserman~\cite{DBLP:journals/jacm/BlumKW03} that runs in time $2^{O(n/\log_2(n))}$. Over time, the hardness of LPN (and its generalization to non-binary finite fields) has been used as a starting point in several hardness results~\cite{DBLP:journals/siamcomp/KalaiKMS08,DBLP:journals/siamcomp/FeldmanGKP09} and constructing cryptographic primitives~\cite{DBLP:conf/focs/Alekhnovich03}. On the other hand, lower-bounds for the problem are known only in restricted models such as Statistical Query Learning\footnote{The SQ model does not seem to distinguish between noisy and noiseless variants of parity learning and yields the same lower bound in both cases.}~\cite{sq}.

Learning under noise is at least as hard as learning without noise and thus, memory-sample lower bounds for parity learning~\cite{Raz16} holds for learning parity under noise too. It is natural to ask -- can we get better space lower bounds for learning parities under noise? In this work, we are able to strengthen the memory lower bound to $\Omega(n^2/\vareps)$ for parity learning with noise. 

Our results actually extend to a broad class of learning problems under noise. As in~\cite{Raz17} and follow up works, we represent a learning problem using a matrix.
Let $\XX$, $\AA$ be two finite sets (where $\XX$ represents the concept-class that we are trying to learn and $\AA$ represents the set of possible samples).
Let $M: \AA \times \XX \rightarrow \{-1,1\}$ be a matrix.
The matrix $M$ represents the following learning problem with error parameter $\vareps$ ($0<\vareps<\frac{1}{2}$):
An unknown element $x \in \XX$ was chosen uniformly at random. A learner tries to learn $x$ from a stream of samples,
$(a_1, b_1), (a_2, b_2) \ldots$, where for every $i$, $a_i \in \AA$ is
chosen uniformly at random and $b_i = M(a_i,x)$ with probability $\frac{1}{2}+\vareps$.

\subsection*{Our Results}

We use extractor-based characterization of the matrix $M$ to prove our lower bounds, as done in~\cite{GRT18}. Our main result can be stated as follows (Corollary~\ref{cor:main1}):
Assume that $k,\ell, r$ are such that any submatrix of $M$ of at least $2^{-k} \cdot |A|$ rows and at least
$2^{-\ell} \cdot |X|$ columns, has a bias of at most $2^{-r}$.
Then,
any learning algorithm for the learning problem corresponding to $M$ with error parameter $\vareps$ requires either a memory
of size at least
$\Omega\left(k \cdot  \ell/\vareps \right)$, or at least $2^{\Omega(r)}$ samples.
Thus, we get an extra factor of $\frac{1}{\vareps}$ in the space lower bound for all the bounds on learning problems that~\cite{GRT18} imply, some of which are as follows (see~\cite{GRT18} for details on why the corresponding matrices satisfy the extractor-based property):
\begin{enumerate}
\item {\bf Parities with noise:}
A learner tries to learn $x=(x_1,\ldots,x_n) \in \{0,1\}^n$, from (a stream of) random linear equations over $\F_2$ which are correct with probability $\frac{1}{2}+\vareps$ and flipped with probability $\frac{1}{2}-\vareps$. Any learning
algorithm
requires either a memory
of size $\Omega(n^2/\vareps)$ or an exponential number of samples.
\item {\bf Sparse parities with noise:}
A learner tries to learn $x=(x_1,\ldots,x_n) \in \{0,1\}^n$ of sparsity $\ell$, from (a stream of) random linear equations over $\F_2$ which are correct with probability $\frac{1}{2}+\vareps$ and flipped with probability $\frac{1}{2}-\vareps$.
Any learning algorithm  requires:
\begin{enumerate}
	\item Assuming $\ell \le n/2$: either a memory
of size $\Omega(n \cdot \ell/\vareps)$ or $2^{\Omega(\ell)}$ samples.
	\item Assuming $\ell \le n^{0.9}$:  either a memory
of size  $\Omega(n \cdot \ell^{0.99}/\vareps)$ or $\ell^{\Omega(\ell)}$ samples.
\end{enumerate}
\item {\bf Learning from noisy sparse linear equations:}
A learner tries to learn $x=(x_1,\ldots,x_n) \in \{0,1\}^n$, from (a stream of) random sparse linear equations, of sparsity $\ell$, over $\F_2$, which are correct with probability $\frac{1}{2}+\vareps$ and flipped with probability $\frac{1}{2}-\vareps$.
Any learning algorithm requires:
\begin{enumerate}
	\item Assuming $\ell \le n/2$: either a memory
of size $\Omega(n \cdot \ell/\vareps)$ or $2^{\Omega(\ell)}$ samples.
	\item Assuming $\ell \le n^{0.9}$:  either a memory
of size  $\Omega(n \cdot \ell^{0.99}/\vareps)$ or $\ell^{\Omega(\ell)}$ samples.
\end{enumerate}
\item {\bf Learning from noisy low-degree equations:}
A learner tries to learn $x=(x_1,\ldots,x_n) \in \{0,1\}^n$, from (a stream of) random multilinear polynomial  equations of degree at most $d$, over $\F_2$, which are correct with probability $\frac{1}{2}+\vareps$ and flipped with probability $\frac{1}{2}-\vareps$.
We prove that if $d\le 0.99 \cdot n$, any learning algorithm
requires either a memory
of size
$\Omega\left( \binom{n}{\le d} \frac{n}{d\cdot \vareps} \right)$
or
$2^{\Omega(n/d)}$
samples (where $\binom{n}{\le d} = \binom{n}{0} + \binom{n}{1} + \ldots + \binom{n}{d}$).
\item {\bf Low-degree polynomials with noise:}
A learner tries to learn an $n'$-variate multilinear polynomial $p$ of degree at most $d$ over $\F_2$,
from (a stream of) random evaluations of $p$ over $\F_2^{n'}$, which are correct with probability $\frac{1}{2}+\vareps$ and flipped with probability $\frac{1}{2}-\vareps$.
We prove that if $d\le 0.99 \cdot n'$,
    any learning algorithm
requires  either a memory
of size
$\Omega\left( \binom{n'}{\le d} \cdot  \frac{n'}{d\cdot \vareps} \right)$
or
$2^{\Omega(n'/d)}$
samples.

\end{enumerate}

\subsection*{Techniques}

Our proof follows the proof of~\cite{Raz17,GRT18} very closely and builds on that proof. We extend the extractor-based result of ~\cite{GRT18} to the noisy case and a straightforward adaptation to its proof gives the stronger lower bound for the noisy case (which reflects on the strength of the current techniques). The main contribution of this paper is not a technical one but establishing stronger space lower bounds for a well-studied problem of learning parity with noise, using the current techniques.

\subsection*{Discussion and Open Problem}
Let's look at a space upper bound for the problem of learning parity with noise, that is, a learner tries to learn $x\in \{0,1\}^n$ from a stream of samples of the form
$(a, b)$, where $a \in\{0,1\}^n$ is chosen uniformly at random and $b =a\cdot x$ with probability $\frac{1}{2}+\vareps$ and $b = 1-a\cdot x$ with probability $\frac{1}{2}-\vareps$ (here, $a\cdot x$ represents the inner product of $a$ and $x$ in $\F_2$, that is, $a\cdot x=\sum_i a_i x_i \mod 2$).

\paragraph{Upper Bound:} Consider the following algorithm $A$: Store the first $m=O(n/\vareps^2)$ samples. Check for every $x'\in\{0,1\}^n$, if for at least $\left(\frac{1}{2}+\frac{\vareps}{2}\right)$ fraction of the samples $(a_1,b_1),\ldots, (a_m,b_m)$, $a_i\cdot x'$ agrees with $b_i$. Output the first $x'$ that satisfies the check. In expectation, $a_i\cdot x$ would agree with $b_i$ for $\left(\frac{1}{2}+\vareps\right)$ fraction of the samples, and otherwise for $x'\neq x$, in expectation, $a_i\cdot x'$ would agree with $b_i$ for half the samples.
Therefore, for large enough $m$, using Chernoff bound and a union bound, with high probability ($1-o(1)$) over the $m$ samples, $x'$ satisfies the check if and only if $x'=x$, and $A$ outputs the correct answer under such an event. $A$ uses $O(n/\vareps^2)$ samples and $O(n^2/\vareps^2)$ bits of space.

In this paper, we prove that any algorithm that learns parity with noise from a stream of samples (as defined above) requires $\Omega(n^2/\vareps)$ bits of space or exponential number of samples. Improving the lower bound to match the upper bound (or vice versa) is a fascinating open problem and we conjecture that the upper bound is tight.  As each sample gives at most $O(\vareps^2)$ bits of information about $x$, we can at least show that a learning algorithm requires $O(n/\vareps^2)$ samples to learn $x$ (which corresponds to using $O(n^2/\vareps^2)$ bits of space if each sample is stored). 

\begin{conjecture}
Any learner that tries to learn $x\in \{0,1\}^n$ from a stream of samples of the form
$(a, b)$, where $a \in\{0,1\}^n$ is chosen uniformly at random and $b =a\cdot x$ with probability $\frac{1}{2}+\vareps$ and $b = 1-a\cdot x$ with probability $\frac{1}{2}-\vareps$, requires either $\Omega(n^2/\vareps^2)$ bits of memory or $2^{\Omega(n)}$ samples.
\end{conjecture}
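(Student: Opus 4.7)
The plan is to push the current memory lower bound from $\Omega(n^2/\vareps)$ up to $\Omega(n^2/\vareps^2)$, matching the naive upper bound of storing $O(n/\vareps^2)$ samples. The heuristic target is the information-theoretic observation already noted in the discussion: a single noisy sample carries only $\Theta(\vareps^2)$ bits about $x$, whereas the present adaptation of \cite{Raz17,GRT18} effectively ``charges'' each step of the branching program only $\Theta(\vareps)$ worth of progress. So the right goal is to install a potential function whose per-step growth contracts as $\vareps^2$, reflecting the strong-data-processing constant of a binary symmetric channel with bias $\vareps$, and then rerun the branching-program truncation argument around that potential.

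Concretely, I would start from the extractor-based framework of \cite{GRT18} and revisit the core inductive lemma that controls how much reading one sample $(a,b)$ can sharpen the posterior on $x$ stored at a vertex of the program. In the current proof, the bias condition on sub-matrices is combined with an $\ell_1$-style distance-from-uniform quantity, and a single noisy bit can shift this by $\Theta(\vareps)$. I would replace this by a chi-squared or KL-based potential $\Phi(\mathcal{P}_{X\mid v}\;\|\;\mathcal{U})$, prove that conditioning on $b$ increases $\Phi$ in expectation by at most $O(\vareps^2)\cdot\text{(bias term)}$ via a Pinsker/SDPI-style inequality, and then show that the ``bad/significant edge'' partition of \cite{Raz17} can still be carried out with this quadratically-contracting potential. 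If the truncation argument survives, the number of steps needed for a typical trajectory to accumulate $\Omega(\ell)$ units of potential becomes $\Omega(\ell/\vareps^2)$, which multiplied by the $k$-factor from the width of the program yields $\Omega(k\ell/\vareps^2)$ memory, specializing to $\Omega(n^2/\vareps^2)$ for parity.

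The hard part, and the reason the authors leave this as a conjecture, is precisely in making the truncation argument compatible with a nonlinear potential. The branching-program analysis of \cite{Raz17,GRT18} crucially uses that the progress measure is essentially a linear functional of the posterior, so that bad events can be removed by zeroing out a small fraction of probability mass without blowing up the remaining potential. A KL- or $\chi^2$-type quantity is not monotone under such truncations and can spike when a single high-likelihood $x$ is zeroed; controlling this would likely require a new ``smoothed'' potential (perhaps $\Phi$ computed after regularizing the posterior back toward uniform by a small amount), together with a strengthened extractor condition that bounds second-moment rather than just average bias on large sub-matrices. I expect that formulating the right second-moment extractor notion, and showing that the matrices arising from parity, sparse parity and low-degree equations actually satisfy it, is where the real work of proving the conjecture lies.
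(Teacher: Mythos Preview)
The statement you are addressing is a \emph{conjecture}, and the paper does not prove it; the authors explicitly state that ``the proof of the conjecture, if true, would lead to new technical insights (beyond extractor-based techniques).'' So there is no paper proof to compare against, and your submission is, as you yourself frame it, a research plan rather than a proof.

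Your plan is a reasonable outline of where the extra factor of $1/\vareps$ ought to come from: the strong-data-processing inequality for $\mathrm{BSC}(\tfrac12-\vareps)$ does give a contraction coefficient of order $\vareps^2$ for $\chi^2$ or KL, so a potential built on those divergences would, in a clean world, grow by $O(\vareps^2)$ per step rather than $O(\vareps)$. You have also correctly identified the actual obstruction: the branching-program argument of \cite{Raz17,GRT18} is organized around the linear potential $\langle \P_{x|v},\P_{x|s}\rangle$ raised to a power, and the truncation steps (removing $\Sig(v)$, discarding $\Bad(v)$) rely on monotonicity and pointwise control that a $\chi^2$- or KL-based potential does not enjoy. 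Concretely, the step in the paper where one writes $\P_{x|e}(x')\cdot\P_{x|s}(x') = f(x')\cdot(1+2\vareps\, b\, M(a,x'))\cdot c_e^{-1}$ and then bounds $(1+2\vareps\, t(a))^{\kk}$ is exactly linear in the bias; replacing $f$ by something quadratic in $\P_{x|v}$ destroys the clean product structure that lets the extractor hypothesis control the ``bad'' rows. Your suggestion of a smoothed potential plus a second-moment extractor condition is plausible, but you have not shown that any such condition is both (i) strong enough to tame the truncation blow-up and (ii) actually satisfied by the parity matrix. Until that is done, this remains a proposal, not a proof---which is consistent with the paper's own assessment that resolving the conjecture requires techniques beyond those in the paper.
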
 
The proof of the conjecture, if true, would lead to new technical insights (beyond extractor-based techniques) into proving time-space (or memory-sample) lower bounds for learning problems.

\subsection*{Outline of the Paper} In Section \ref{sec:prelim}, we establish certain notations and definitions, which are borrowed from~\cite{Raz17,GRT18}. We give a proof overview in Section \ref{sec:overview} and prove the main theorem in Section \ref{sec:main-result}. 
\section{Preliminaries}\label{sec:prelim}

Denote by $\U_X: \XX \rightarrow \Reals^+$ the uniform distribution over $\XX$.
Denote by $\log$ the logarithm to  base $2$.
For a random variable $Z$ and an event $E$,
we denote by $\P_Z$ the distribution of the random variables $Z$, and
we denote by $\P_{Z|E}$ the distribution of the random variable $Z$ conditioned on the event $E$.

\subsubsection*{Viewing a Learning Problem, with error $\frac{1}{2}-\vareps$, as a Matrix}

Let $\XX$, $\AA$ be two finite sets of size larger than 1.
Let $n = \log_2|\XX|$ and $n' = \log_2|\AA|$.

Let $M: \AA \times \XX \rightarrow \{-1,1\}$ be a matrix. The matrix $M$ corresponds to the following learning problem with error parameter $\vareps$ ($0<\vareps< \frac{1}{2}$).
There is an unknown element $x \in \XX$ that was chosen uniformly at random. A learner tries to learn $x$ from samples
$(a, b)$, where $a \in \AA$ is chosen uniformly at random, and $b = M(a,x)$ with probability $\frac{1}{2}+\vareps$ and $b = -M(a,x)$ with probability $\frac{1}{2}-\vareps$.
That is, the learning algorithm is given a stream of samples,
$(a_1, b_1), (a_2, b_2) \ldots$, where each~$a_t$ is uniformly distributed, and $b_t = M(a_t,x)$ with probability $\frac{1}{2}+\vareps$ and $b =-M(a_t,x)$ with probability $\frac{1}{2}-\vareps$.

\subsubsection*{Norms and Inner Products}
Let $p \geq 1$.
For a function
$f: \XX \rightarrow \Reals$,
denote by $\norm{f}_{p}$ the $\ell_p$ norm of $f$, with respect to the  uniform distribution over $\XX$, that is:
$$\norm{f}_{p} =
\left( \Ex_{x \in_R \XX} \left[ |f(x)|^{p} \right] \right)^{1/p}.$$
%
%

For two functions
$f,g: \XX \rightarrow \Reals$, define their inner product with respect to the uniform distribution over $X$ as
$$\langle f,g \rangle =
\Ex_{x \in_R \XX} [ f(x) \cdot g(x) ].$$

For a matrix $M: \AA \times \XX \to \Reals$ and a row $a \in \AA$, we denote by $M_a: \XX \to \Reals$ the function corresponding to the $a$-th row of $M$. Note that for a function $f: \XX \to \Reals$, we have $\inner{M_a, f} = \frac{(M \cdot f)_a}{|X|}$. Here, $M \cdot f$ represents the matrix multiplication of $M$ with $f$.

\subsubsection*{$L_2$-Extractors and $L_\infty$-Extractors} 

\begin{definition} {\bf 	$L_2$-Extractor:} \label{definition:l2-extractor}
Let $\XX,\AA$ be two finite sets.
A matrix $M: \AA \times \XX \to \{-1,1\}$ is a $(k,\ell)$-$L_2$-Extractor with error $2^{-r}$, if for every non-negative $f : \XX \to \Reals$ with $\frac{\norm{f}_2}{\norm{f}_1} \le 2^{\ell}$ there are at most $2^{-k} \cdot |A|$ rows $a$ in $A$ with
$$
\frac{|\inner{M_a,f}|}{\norm{f}_1}
\ge 2^{-r}\;.
$$
\end{definition}

Let $\Omega$ be a finite set. We denote a distribution over $\Omega$ as a function $f:\Omega \to \Reals^{+}$ such that $\sum_{x\in \Omega}{f(x)} = 1$.
We say that a distribution $f:\Omega \to \Reals^{+}$ has min-entropy $k$ if for all $x\in \Omega$, we have $f(x) \le 2^{-k}$.

\begin{definition}{\bf $L_\infty-$Extractor:} \label{definition:min-extractor}
Let $\XX,\AA$ be two finite sets.
A matrix $M:\AA\times \XX\rightarrow \{-1,1\}$ is a $\left(k ,\ell \sim r \right)$-$L_\infty$-Extractor
if for every distribution $p_x: \XX \to \Reals^{+}$ with min-entropy at least $(\log(|\XX|)-\ell)$
and every distribution $p_a: \AA \to \Reals^{+}$ with min-entropy at least $(\log(|\AA|)-k)$,
$$\bigg|\sum_{a'\in \AA} \sum_{x' \in \XX} p_a(a') \cdot p_x(x') \cdot M(a',x')\bigg| \le 2^{-r}.$$
\end{definition}

\subsubsection*{Branching Program for a Learning Problem} \label{section:def}

In the following definition, we model the learner for the learning problem that corresponds to the matrix $M$, by a {\em branching program}, as done by previous papers starting with~\cite{Raz16}.

\begin{definition} {\bf Branching Program for a Learning Problem:}
A branching program of length $m$ and width $d$, for learning, is a directed (multi) graph with vertices arranged in $m+1$ layers containing at most $d$ vertices each. In the first layer, that we think of as layer~0, there is only one vertex, called the start vertex.
A vertex of outdegree~0 is called a  leaf.
All vertices in the last layer are leaves
(but there may be additional leaves).
Every non-leaf vertex in the program has $2|\AA|$ outgoing edges, labeled by elements
$(a,b) \in \AA \times \{-1,1\}$, with exactly one edge labeled by each such $(a,b)$, and all these edges going
into vertices in the next layer.
Each leaf $v$ in the program is labeled by an element $\tilde{x}(v) \in \XX$, that
we think of as the output of the program on that leaf.

{\bf Computation-Path:} The samples
$(a_1, b_1), \ldots, (a_m, b_m) \in \AA \times \{-1,1\}$
that are given as input,
define a
computation-path in the branching
program, by starting from the start vertex
and following at
step~$t$ the edge labeled by~$(a_t, b_t)$, until reaching a leaf.
The program outputs the label $\tilde{x}(v)$ of the leaf $v$ reached by the computation-path.

{\bf Success Probability:}
The success probability of the program is the probability that $\tilde{x}=x$,
where $\tilde{x}$ is the element that the program outputs, and the probability is over $x,a_1,\ldots,a_m,b_1,\ldots,b_m$ (where $x$ is uniformly distributed over $\XX$ and $a_1,\ldots,a_m$ are uniformly distributed over $\AA$, and for every $t$, $b_t = M(a_t,x)$ with probability $\frac{1}{2}+\vareps$ and $-M(a_t,x)$ with probability $\frac{1}{2}-\vareps$).

\end{definition}
A learning algorithm, using $m$ samples and a memory of $s$ bits, can be modeled as a branching program\footnote{The lower bound holds for randomized learning algorithms because a branching program is a non-uniform model of computation, and we can fix a \emph{good} randomization for the computation without affecting the width.} of length $m$ and width $2^{O(s)}$. Thus, we will focus on proving width-length tradeoffs for any branching program that learns an extractor-based learning problem with noise, and such tradeoffs would translate into memory-sample tradeoffs for the learning algorithms.
\section{Overview of the Proof} \label{sec:overview}

The proof adapts the extractor-based time-space lower bound of~\cite{GRT18} to the \emph{noisy} case, which in turn built on~\cite{Raz17} that gave a general  technique for proving memory-samples lower bounds. We recall the arguments in~\cite{Raz17,GRT18} for convenience. 

Assume that $M$ is a $(k',\ell')$-$L_2$-extractor with error $2^{-r'}$,
and let $r = \min\{k', \ell', r'\}$.
Let $B$ be a branching program
for the noisy learning problem that corresponds to the matrix $M$. We want to prove that $B$ has at least $2^{\Omega(r)}$ length or requires at least $2^{\Omega(\frac{ k' \ell'}{\vareps})}$ width (that is, any learning algorithm solving the learning problem corresponding to the matrix $M$ with error parameter $\vareps$, requires either $\Omega(\frac{ k' \ell'}{\vareps})$ memory or exponential number of samples).
Assume for a contradiction that $B$ is
of length $m=2^{c r}$ and width $d=2^{c \frac{ k' \ell'}{\vareps}}$,
where $c>0$ is a  small constant.

We define the {\it truncated-path}, $\T$, to be the same as the computation-path of $B$, except that it sometimes stops before reaching a leaf.
Roughly speaking, $\T$ stops before reaching a leaf if certain ``bad'' events occur.
Nevertheless, we show that the probability that $\T$ stops before reaching a leaf is negligible, so we can think of $\T$ as almost identical to the computation-path.

For a vertex $v$ of $B$, we denote by $E_v$
the event that $\T$ reaches the vertex $v$.
We denote  by $\Pr(v) = \Pr(E_v)$ the probability for $E_v$
(where the probability is over $x,a_1,\ldots,a_m, b_1,\ldots,b_m$), and we denote
by $\P_{x|v} = \P_{x|E_v}$ the distribution of the random variable $x$ conditioned on the event~$E_v$.
Similarly,
for an edge~$e$ of the branching program $B$, let $E_e$ be
the event that $\T$ traverses the edge~$e$.
Denote, $\Pr(e) = \Pr(E_e)$, and
$\P_{x|e} = \P_{x|E_e}$.

A vertex $v$ of $B$ is called {\em significant} if
$$
\norm{\P_{x|v}}_{2} > 2^{\ell'} \cdot 2^{-n}.
$$
Roughly speaking, this means that conditioning on the event that $\T$ reaches the
vertex~$v$, a non-negligible amount of information is known about $x$.
In order to guess $x$ with a non-negligible success probability, $\T$ must reach a significant vertex. Lemma~\ref{lemma-main1} shows that the probability that $\T$ reaches any significant vertex is negligible, and thus the main result follows.

To prove Lemma~\ref{lemma-main1}, we show that for every fixed significant vertex $s$, the probability that $\T$ reaches $s$ is at most
$2^{-\Omega(k' \ell'/\vareps)}$ (which is smaller than one over the number of vertices
in~$B$). Hence, we can use a union bound to prove the lemma.

The proof  that the probability that $\T$ reaches $s$ is extremely small is the main
part of the proof.
To that end, we use the following functions to measure the progress made by the branching program towards reaching $s$.

Let $L_i$ be the set of vertices $v$ in layer-$i$ of $B$,
such that $\Pr (v) >0$. Let $\Gamma_i$ be the set of edges $e$ from layer-$(i-1)$ of $B$ to layer-$i$ of $B$,
such that $\Pr (e) >0$.
Let
$$
{\cal Z}_i =
\sum_{v \in L_i} \Pr(v) \cdot \langle \P_{x|v},\P_{x|s} \rangle^{k'/2\vareps},
$$
$$
{\cal Z}'_i =
\sum_{e \in \Gamma_i} \Pr(e) \cdot \langle \P_{x|e},\P_{x|s} \rangle^{k'/2\vareps}.
$$
We think of ${\cal Z}_i, {\cal Z}'_i$ as measuring the progress made by the branching program, towards reaching a state with distribution similar to
$\P_{x|s}$. 

We show that each ${\cal Z}_i$ may only be negligibly larger than ${\cal Z}_{i-1}$. Hence, since it's easy to calculate that ${\cal Z}_0 = 2^{-\frac{2nk'}{2\vareps}}$, it follows that
${\cal Z}_i$  is close to $2^{-\frac{2nk'}{2\vareps}}$, for every $i$.
On the other hand, if $s$ is in layer-$i$ then
${\cal Z}_i$ is at least $\Pr(s) \cdot \langle \P_{x|s},\P_{x|s}\rangle^{\frac{k'}{2\vareps}}$. Thus,
$\Pr(s) \cdot \langle \P_{x|s},\P_{x|s}\rangle^\frac{k'}{2\vareps}$ cannot be much larger than
$2^{-2n\frac{k'}{2\vareps}}$.
Since $s$ is significant,
$\langle \P_{x|s},\P_{x|s}\rangle^\frac{k'}{2\vareps} > 2^{(2\ell' -2n) \frac{k'}{2\vareps}}$
and hence $\Pr(s)$ is at most $2^{-\Omega(\frac{k' \ell'}{\vareps})}$. 

The proof that ${\cal Z}_i$ may only be negligibly larger than ${\cal Z}_{i-1}$
is done in two steps:
Claim~\ref{claim-p2} shows by
a simple convexity argument that ${\cal Z}_i \leq {\cal Z}'_i$. The hard part, that is done in Claim~\ref{claim-p0} and Claim~\ref{claim-p1}, is to prove that
${\cal Z}'_i$ may only be negligibly larger than ${\cal Z}_{i-1}$.

For this proof, we
define for every vertex $v$, the set of edges
$\Gamma_{out}(v)$ that are going out of~$v$, such that $\Pr(e) >0$.
Claim~\ref{claim-p0} shows
that for every vertex $v$,
$$
\sum_{e \in \Gamma_{out}(v)} \Pr(e)
\cdot \langle \P_{x|e},\P_{x|s} \rangle^{k'/2\vareps}$$
may only be negligibly higher than
$$
\Pr(v) \cdot
\langle \P_{x|v},\P_{x|s} \rangle^{k'/2\vareps}.
$$

For the proof of
Claim~\ref{claim-p0}, which is the hardest proof in the paper,
we follow~\cite{Raz17,GRT18} and
consider the function $\P_{x|v} \cdot \P_{x|s}$. We first show how to bound
$\norm{\P_{x|v} \cdot \P_{x|s}}_2$. We then consider two cases:
If $\norm{\P_{x|v} \cdot \P_{x|s}}_1$
is negligible, then
$\langle \P_{x|v},\P_{x|s} \rangle^{k'/2\vareps}$ is negligible and doesn't contribute much,
and we show that for every $e \in \Gamma_{out}(v)$,
$\langle \P_{x|e},\P_{x|s} \rangle^{k'/2\vareps}$ is also negligible and doesn't contribute much.
 If $\norm{\P_{x|v} \cdot\P_{x|s}}_1$ is non-negligible,
we use the bound on $\norm{\P_{x|v} \cdot\P_{x|s}}_2$ and the
assumption that $M$ is a $(k',\ell')$-$L_2$-extractor
to show that for almost all edges $e \in \Gamma_{out}(v)$, we have that
$\langle \P_{x|e},\P_{x|s} \rangle^{k'/2\vareps}$ is very close to
$\langle \P_{x|v},\P_{x|s} \rangle^{k'/2\vareps}$.
Only an exponentially small ($2^{-k'}$) fraction of edges are ``bad'' and give
a significantly larger $\langle \P_{x|e},\P_{x|s} \rangle^{k'/2\vareps}$. In the noiseless case, any ``bad" edge can increase $\langle \P_{x|v},\P_{x|s} \rangle$ by a factor of 2 in the worst case, and hence~\cite{GRT18} raised $\langle \P_{x|v},\P_{x|s} \rangle$ and $\langle \P_{x|e},\P_{x|s} \rangle$ to the power of $k'$, as it is the largest power for which the contribution of the ``bad'' edges is still small (as their fraction is
$2^{-k'}$). But in the noisy case, any ``bad" edge can increase $\langle \P_{x|v},\P_{x|s} \rangle$ by a factor of at most $(1+2\vareps)$ in the worst case, and thus, we can afford to raise $\langle \P_{x|v},\P_{x|s} \rangle$ and $\langle \P_{x|e},\P_{x|s} \rangle$ to the power of $k'/2\vareps$. \emph{This is where our proof differs from that of~\cite{GRT18}}.

This outline oversimplifies many details. To make the argument work, we force $\T$ to stop at significant vertices and whenever $\P_{x|v}(x)$ is large, that is, at significant values, as done in previous papers. And we force $\T$ to stop before traversing some edges, that are so ``bad'' that
their contribution to ${\cal Z}'_i$ is huge and they cannot be ignored.
We show that the total probability that $\T$ stops before reaching a leaf is negligible.

\section{Main Result}\label{sec:main-result}
\begin{theorem} \label{thm:TM-main1}
Let $\tfrac{1}{100}< c <\tfrac{\ln2}{3}$.
Fix $\gamma$
to be such that
$\tfrac{3c}{\ln 2} < \gamma^2 < 1$.
Let $\XX$, $\AA$ be two finite sets.
Let $n = \log_2|\XX|$.
Let $M: \AA \times \XX \rightarrow \{-1,1\}$ be a matrix
which is a $(k',\ell')$-$L_2$-extractor with error $2^{-r'}$,
for sufficiently large\footnote{By {\it ``sufficiently large''} we mean that $k',\ell',r'$ are larger than some  constant that depends on $\gamma$.}
$k',\ell'$ and $r'$, where $\ell' \leq n$.
Let
\begin{equation}
\label{eq:param setting}
\rr :=  \min\left\{ \tfrac{r'}{2}, \tfrac{(1-\gamma)k'}{2}, \tfrac{(1-\gamma)\ell'}{2} -1 \right\}.
\end{equation}
Let $B$ be a branching program, of
length at most $2^{r}$ and width at most $2^{c \cdot k' \cdot \ell'/\vareps}$,
for the learning problem that corresponds to the matrix $M$ with error parameter $\vareps$.
Then,
the success probability of $B$
is at most $O(2^{-r})$.
\end{theorem}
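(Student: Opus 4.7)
The plan is to run the contradiction argument sketched in Section~\ref{sec:overview}: assume $B$ has length $m = 2^{\rr}$ and width $d = 2^{c k' \ell'/\vareps}$, and show that the success probability is $O(2^{-\rr})$. The first step is to construct the \emph{truncated path} $\T$, obtained from the computation path of $B$ by halting whenever one of three bad events would occur at the current step: reaching a \emph{significant vertex} $v$ with $\norm{\P_{x \mid v}}_2 > 2^{\ell'} \cdot 2^{-n}$, observing a \emph{significant value} for which $\P_{x \mid v}(x)$ would jump by too large a factor after the next sample, or traversing a \emph{bad edge} for which $\langle \P_{x \mid e}, \P_{x \mid s}\rangle$ is disproportionately larger than $\langle \P_{x \mid v}, \P_{x \mid s}\rangle$. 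Following the template of~\cite{Raz17,GRT18}, I would verify that the total probability of $\T$ halting early on account of any of these events is $O(2^{-\rr})$, so that it suffices to bound $\sum_{s\text{ significant}}\Pr(s)$, where the sum ranges over vertices ever reached by $\T$.

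For a fixed significant vertex $s$ in some layer $i$, define the progress measures $\mathcal{Z}_j = \sum_{v \in L_j} \Pr(v) \cdot \langle \P_{x \mid v}, \P_{x \mid s}\rangle^{k'/(2\vareps)}$ and $\mathcal{Z}'_j = \sum_{e \in \Gamma_j} \Pr(e) \cdot \langle \P_{x \mid e}, \P_{x \mid s}\rangle^{k'/(2\vareps)}$. Directly $\mathcal{Z}_0 = \langle \U_X, \P_{x \mid s}\rangle^{k'/(2\vareps)} = 2^{-n k'/\vareps}$, while the contribution of $v = s$ gives $\mathcal{Z}_i \geq \Pr(s) \cdot \langle \P_{x \mid s}, \P_{x \mid s}\rangle^{k'/(2\vareps)} \geq \Pr(s) \cdot 2^{(2\ell'-2n)k'/(2\vareps)}$ by significance. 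Hence if one can show $\mathcal{Z}_j \leq (1 + 2^{-\Omega(\rr)})\,\mathcal{Z}_{j-1}$ at each of the $m = 2^{\rr}$ steps, then $\mathcal{Z}_j$ never exceeds $2 \cdot 2^{-n k'/\vareps}$, which yields $\Pr(s) \leq 2^{-\Omega(k'\ell'/\vareps)}$ per significant vertex. Summing over the at most $md$ significant vertices and choosing $c$ small enough relative to the hidden constant then finishes the argument.

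The per-layer growth reduces to a three-step chain exactly as in~\cite{Raz17,GRT18}: a convexity argument bounds $\mathcal{Z}_j \leq \mathcal{Z}'_j$, and the remaining two claims show that $\mathcal{Z}'_j$ exceeds $\mathcal{Z}_{j-1}$ by at most the advertised factor. The main obstacle, and essentially the only step where the noisy analysis departs from the noiseless one, is the per-vertex bound: for every non-leaf vertex $v$,
\[
\sum_{e \in \Gamma_{\mathrm{out}}(v)} \Pr(e) \cdot \langle \P_{x \mid e}, \P_{x \mid s}\rangle^{k'/(2\vareps)} \;\leq\; \left(1 + 2^{-\Omega(\rr)}\right) \Pr(v) \cdot \langle \P_{x \mid v}, \P_{x \mid s}\rangle^{k'/(2\vareps)}.
\]
For this I would first bound $\norm{\P_{x \mid v} \cdot \P_{x \mid s}}_2$ using the significant-value truncation, then split into cases on $\norm{\P_{x \mid v} \cdot \P_{x \mid s}}_1$. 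When this mass is negligible both sides are negligible and the bound is trivial; when it is non-negligible, I would apply the $L_2$-extractor property to $f = \P_{x \mid v} \cdot \P_{x \mid s}$ to conclude that for all but a $2^{-k'}$ fraction of rows $a$, $|\langle M_a, f\rangle|/\norm{f}_1 \leq 2^{-r'}$, which pins $\langle \P_{x \mid e}, \P_{x \mid s}\rangle$ very close to $\langle \P_{x \mid v}, \P_{x \mid s}\rangle$ along the good edges.

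The crucial new feature of the noisy case is a smaller worst-case inflation for bad edges. In the noiseless setting, observing a ``wrong'' label can eliminate essentially all of the mass in $\P_{x \mid v}$, so the worst-case multiplicative factor in $\langle \P_{x \mid v}, \P_{x \mid s}\rangle$ per bad edge is a constant (bounded by $2$); that is why~\cite{GRT18} can only raise to the power $k'$, matching the $2^{-k'}$ fraction of bad edges. With noise $\vareps$, the two possible labels have conditional probabilities differing only by $O(\vareps)$, so a Bayes-rule computation limits this inflation to a factor of $1 + 2\vareps$. Raising to the power $k'/(2\vareps)$ then gives $(1 + 2\vareps)^{k'/(2\vareps)} \leq e^{k'}$, which keeps the bad-edge contribution comparable to the good-edge one. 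This is the single place where the $1/\vareps$ improvement in the memory bound enters; with this adjustment the per-layer growth estimate goes through, and the union bound above completes the proof.
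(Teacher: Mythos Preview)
Your proposal captures the correct high-level structure and correctly identifies the key noisy-case observation—that a Bayes update can inflate $\langle \P_{x\mid v}, \P_{x\mid s}\rangle$ by at most a factor $1+2\vareps$ rather than $2$. However, the specific exponent you choose, $k'/(2\vareps)$, is too large and breaks the bad-edge bound. The bad edges have density at most $2^{-k'}$, and along each the inflation is at most $(1+2\vareps)^{k'/(2\vareps)} \le e^{k'}$; hence their total contribution to the normalized sum is bounded by $2^{-k'} \cdot e^{k'} = (e/2)^{k'}$, which \emph{grows} exponentially in $k'$ rather than being negligible. Your sentence ``which keeps the bad-edge contribution comparable to the good-edge one'' is therefore false as stated.

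The fix, and this is exactly what the paper does, is to take the exponent to be $\kk = \tfrac{\gamma \ln 2}{2\vareps}\,k'$ for the $\gamma<1$ fixed in the theorem statement. Then
\[
2^{-k'}\cdot (1+2\vareps)^{\kk} \;\le\; 2^{-k'}\cdot e^{2\vareps \kk} \;=\; 2^{-k'}\cdot 2^{\gamma k'} \;=\; 2^{-(1-\gamma)k'} \;\le\; 2^{-2\rr},
\]
which is what is needed. This is precisely why $\gamma$ appears in the hypotheses; with your exponent $\gamma$ never enters the analysis, and the final union bound cannot be reconciled with the \emph{given} constant $c$ (you are not free to ``choose $c$ small enough''). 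Relatedly, the significance threshold must be set at $2^{\ell}\cdot 2^{-n}$ with $\ell = \gamma\ell'/3$, not $2^{\ell'}\cdot 2^{-n}$ as you wrote, so that after the significant-value truncation one can verify $\norm{f}_2/\norm{f}_1 \le 2^{3\ell+2\rr+2} \le 2^{\ell'}$ and legitimately invoke the $(k',\ell')$-$L_2$-extractor property on $f = \P_{x\mid v}\cdot \P_{x\mid s}$; with $\ell=\ell'$ this inequality fails. Two smaller points: your description of ``bad edges'' depends on the particular target $s$, but the truncated path $\T$ must be defined once, independent of $s$—the paper uses $|(M\cdot \P_{x\mid v})(a)|\ge 2^{-r'}$; and the per-layer growth is not purely multiplicative but carries an additive $(2^{-2n+2})^{\kk}$ term from the ``negligible mass'' case, so your stated recursion $\mathcal{Z}_j \le (1+2^{-\Omega(\rr)})\mathcal{Z}_{j-1}$ is not quite what one proves.
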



\begin{proof}
We recall the proof in~\cite{GRT18, Raz17} and adapt it to the noisy case. 
Let
\begin{equation}
\label{eq:param setting2}
\kk := \frac{\gamma\ln 2}{2\vareps} k'
\qquad \mbox{and} \qquad
\ell := \gamma \ell'/3.
\end{equation}
Our proof differs from~\cite{GRT18} starting with Claim \ref{claim-d0}, which allows us to set $\kk$ to a larger value of $ \frac{\gamma\ln 2}{2\vareps} k'$ instead of $\gamma(\ln 2) k'$ as set in~\cite{GRT18}.
Note that by the assumption that $k',\ell'$ and $r'$ are sufficiently large, we get that $\kk, \ell$ and~$\rr$  are also sufficiently large.
Since $\ell' \le n$, we have
$\ell + \rr \le \tfrac{\gamma \ell'}{3} +\tfrac{(1-\gamma)\ell'}{2} < \tfrac{\ell'}{2} \le \tfrac{n}{2}$.
Thus,
\begin{equation}\label{eq:rr ell}\rr <n/2-\ell.\end{equation}

Let $B$ be a branching program of
length $m=2^{\rr}$ and width\footnote{width lower bound is vacuous for $\vareps<2^{-r/2}$ as regardless of the width, $\Omega(n/\vareps^2)>2^r$ samples are needed to learn.}
 $d=2^{c \cdot k' \cdot \ell'/\vareps}$ for the learning problem that corresponds to the matrix $M$ with error parameter $\vareps$.
We will show that the success probability of $B$
is at most $O(2^{-\rr})$.


\subsection{The Truncated-Path and Additional Definitions and Notation}

We will define the {\bf truncated-path}, $\T$, to be the same as the computation-path of $B$, except that it sometimes stops before reaching a leaf.
Formally,
we define  $\T$, together with several other definitions and notations, by induction on the layers of the branching program $B$.

Assume that we already defined the truncated-path $\T$, until it reaches layer-$i$ of $B$.
For a vertex $v$ in layer-$i$ of $B$, let $E_v$ be
the event that $\T$ reaches the vertex $v$.
For simplicity, we denote  by $\Pr(v) = \Pr(E_v)$ the probability for $E_v$
(where the probability is over $x,a_1,\ldots,a_m,b_1,\ldots,b_m$), and we denote
by $\P_{x|v} = \P_{x|E_v}$ the distribution of the random variable $x$ conditioned on the event $E_v$.

There will be three cases in which the truncated-path $\T$ stops on a non-leaf $v$:
\begin{enumerate}
\item
If $v$ is a, so called, significant vertex, where the $\ell_2$ norm of $\P_{x|v}$ is non-negligible.
(Intuitively, this means that conditioned on the event that $\T$ reaches~$v$, a non-negligible amount of information is known about $x$).
\item
If $\P_{x|v} (x)$ is non-negligible.
(Intuitively, this means that conditioned on the event that $\T$ reaches~$v$, the correct element $x$ could have been guessed with a non-negligible probability).
\item
If $(M \cdot \P_{x|v}) (a_{i+1})$ is non-negligible.
(Intuitively, this means that
$\T$ is about to traverse a ``{bad}'' edge, which is traversed with a non-negligibly higher or lower probability than probability of traversal under uniform distribution on $x$).
\end{enumerate}

Next, we describe these three cases more formally.

\subsubsection*{Significant Vertices}

We say that a vertex $v$ in layer-$i$ of $B$ is {\bf significant} if
$$
\norm{\P_{x|v}}_{2} > 2^{\ell} \cdot 2^{-n}.
$$

\subsubsection*{Significant Values}

Even if $v$ is not significant, $\P_{x|v}$ may have relatively large values.
For a vertex $v$ in layer-$i$ of~$B$, denote by $\Sig(v)$ the set of all $x' \in \XX$, such that,
$$\P_{x|v}(x') > 2^{2\ell +2\rr} \cdot 2^{-n}.$$

\subsubsection*{Bad Edges}

For a vertex $v$ in layer-$i$ of $B$, denote by $\Bad(v)$ the set of all $\alpha \in \AA$, such that,
$$\left| (M \cdot\P_{x|v})(\alpha) \right|
\geq 2^{-r'}.$$

\subsubsection*{The Truncated-Path $\T$}

We define $\T$ by induction on the layers of the branching program $B$.
Assume that we already defined $\T$ until it reaches a vertex $v$ in layer-$i$ of~$B$.
The path $\T$ stops on $v$ if (at least) one of the following occurs:
\begin{enumerate}
\item
$v$ is significant.
\item
$x \in \Sig(v)$.
\item
$a_{i+1} \in \Bad(v)$.
\item
$v$ is a leaf.
\end{enumerate}
Otherwise, $\T$ proceeds by following the edge labeled by~$(a_{i+1}, b_{i+1})$
(same as the computational-path).

\subsection{Proof of Theorem~\ref{thm:TM-main1}}

Since $\T$ follows the computation-path of $B$, except that it sometimes stops before reaching a leaf, the success probability of $B$ is bounded (from above) by the probability that $\T$ stops before reaching a leaf, plus the probability that $\T$ reaches a leaf $v$ and $\tilde{x}(v) = x$.

The main lemma needed for the proof of Theorem~\ref{thm:TM-main1} is
Lemma~\ref{lemma-main1} that shows that
the probability that $\T$ reaches a significant
vertex is at most $O(2^{-\rr})$.

\begin{lemma} \label{lemma-main1}
The probability that $\T$ reaches a significant vertex is at most $O(2^{-\rr})$.
\end{lemma}

Lemma~\ref{lemma-main1} is proved in Section~\ref{section:mainlemma}. We will now show how the proof of Theorem~\ref{thm:TM-main1} follows from that lemma.

Lemma~\ref{lemma-main1}  shows that the probability that $\T$ stops on a non-leaf vertex, because of the first reason (i.e., that the vertex is significant), is small. The next two lemmas imply that the probabilities that $\T$ stops on a non-leaf vertex, because of the second and third reasons, are also small.

\begin{claim} \label{claim-A0}
If $v$ is a non-significant vertex of $B$ then
$$
\Pr_{x}
[x \in \Sig(v)  \; | \; E_v] \leq 2^{-2\rr}.
$$
\end{claim}

\begin{proof}
Since $v$ is not significant,
$$
\Ex_{x' \sim \P_{x|v}} \left[ \P_{x|v}(x') \right]  =
\sum_{x' \in \XX} \left[ \P_{x|v}(x')^{2} \right] =
2^n \cdot \Ex_{x' \in_R \XX} \left[ \P_{x|v}(x')^{2} \right]
\leq 2^{2\ell} \cdot 2^{-n}.
$$
Hence, by Markov's inequality,
$$
\Pr_{x' \sim \P_{x|v}}
\left[
\P_{x|v}(x') > 2^{2\rr} \cdot  2^{2\ell} \cdot 2^{-n}
\right]
\leq 2^{-2\rr}.
$$
Since conditioned on $E_v$, the distribution of $x$ is $\P_{x|v}$, we obtain
\[
\Pr_{x}
\left[x \in \Sig(v) \; \big| \; E_v\right] =
\Pr_{x}
\left[
\left(\P_{x|v}(x) > 2^{2\rr} \cdot  2^{2\ell} \cdot 2^{-n}\right)
\; \big|\; E_v\;
\right]
\leq 2^{-2\rr}.\qedhere
\]
\end{proof}

\begin{claim} \label{claim-A2}
If $v$ is a non-significant vertex of $B$ then
$$
\Pr_{a_{i+1}} [a_{i+1} \in \Bad(v)] \leq 2^{-2\rr}.
$$
\end{claim}

\begin{proof}
Since $v$ is not significant, $\norm{\P_{x|v}}_2 \le 2^{\ell} \cdot 2^{-n}$.
Since $\P_{x|v}$ is a distribution, $\norm{\P_{x|v}}_1 = 2^{-n}$.
Thus, $$\frac{\norm{\P_{x|v}}_2}{\norm{\P_{x|v}}_1} \le 2^{\ell} \le 2^{\ell'}.$$
Since $M$ is a $(k',\ell')$-$L_2$-extractor with error $2^{-r'}$, there are at most $2^{-k'} \cdot |A|$ elements $\alpha \in \AA$ with
$$
\left|\inner{M_{\alpha}, \P_{x|v}}\right| \ge 2^{-r'} \cdot {\norm{\P_{x|v}}_1} = 2^{-r'} \cdot 2^{-n}$$
The claim follows since $a_{i+1}$ is uniformly distributed over $\AA$ and since $k' \ge 2\rr$ (Equation~\eqref{eq:param setting}).
\end{proof}

We can now use Lemma~\ref{lemma-main1}, Claim~\ref{claim-A0} and Claim~\ref{claim-A2} to prove that the probability that~$\T$ stops before reaching a leaf is at most $O(2^{-\rr})$.
Lemma~\ref{lemma-main1}  shows that the probability that~$\T$ reaches a significant vertex and hence stops because of the first reason, is at most $O(2^{-\rr})$.
Assuming that $\T$ doesn't reach any significant vertex (in which case it would have stopped because of the first reason), Claim~\ref{claim-A0} shows that in each step, the probability that $\T$ stops because of the second reason, is at most $2^{-2\rr}$. Taking a union bound over the $m=2^{\rr}$ steps, the total probability that $\T$ stops because of the second reason, is at most $2^{-\rr}$. In the same way,
assuming that $\T$ doesn't reach any significant vertex (in which case it would have stopped because of the first reason), Claim~\ref{claim-A2} shows that in each step, the probability that $\T$ stops because of the third reason, is at most $2^{-2\rr}$. Again, taking a union bound over the $2^{\rr}$ steps, the total probability that $\T$ stops because of the third reason, is at most $2^{-\rr}$.
Thus, the total probability that~$\T$ stops (for any reason) before reaching a leaf is at most $O(2^{-\rr})$.

Recall that if $\T$ doesn't stop before reaching a leaf, it just follows the computation-path of~$B$.
Recall also that by
Lemma~\ref{lemma-main1},  the probability that $\T$ reaches a significant leaf is at most $O(2^{-\rr})$.
Thus, to bound (from above) the success probability of $B$ by
$O(2^{-\rr})$,
it remains to bound the probability that $\T$ reaches a non-significant leaf $v$
and
$\tilde{x}(v) = x$.
Claim~\ref{claim-A1} shows that for any non-significant leaf $v$, conditioned
on the event that $\T$ reaches~$v$, the probability
for $\tilde{x}(v) = x$ is at most $2^{- \rr}$, which completes the proof
of Theorem~\ref{thm:TM-main1}.

\begin{claim} \label{claim-A1}
If $v$ is a non-significant leaf of $B$ then
$$
\Pr [ \tilde{x}(v) = x \; | \; E_v]
\leq  2^{- \rr}.
$$

\end{claim}
\begin{proof}
Since $v$ is not significant,
$$
\Ex_{x' \in_R \XX} \left[ \P_{x|v}(x')^{2} \right]
\leq 2^{2\ell} \cdot 2^{-2n}.
$$
Hence, for every $x' \in \XX$,
$$
\Pr [x=x' \; | \; E_v]=
\P_{x|v}(x')
\leq 2^{\ell} \cdot 2^{-n/2}
\le 2^{- \rr}
$$
since $\rr \le n/2 - \ell$ (Equation~\eqref{eq:rr ell}).
In particular,
\[
\Pr [\tilde{x}(v) = x \; | \; E_v]
\le 2^{- \rr}.\qedhere
\]
\end{proof}

This completes the proof
of Theorem~\ref{thm:TM-main1}.
\end{proof}

\subsection{Proof of Lemma~\ref{lemma-main1}} \label{section:mainlemma}

\begin{proof}
We need to prove that the probability that $\T$ reaches any significant vertex is at most $O(2^{-\rr})$.
Let $s$ be a  significant vertex of $B$. We will bound from above
the probability that~$\T$ reaches~$s$, and then use
a union bound over all significant vertices of $B$.
Interestingly, the upper bound on the width of $B$ is used only in the union bound.

\subsubsection*{The Distributions $\P_{x|v}$ and  $\P_{x|e}$}

Recall that for a vertex $v$ of $B$, we denote by $E_v$
the event that $\T$ reaches the vertex $v$.
For simplicity, we denote  by $\Pr(v) = \Pr(E_v)$ the probability for $E_v$
(where the probability is over $x,a_1,\ldots,a_m,b_1,...,b_m$), and we denote
by $\P_{x|v} = \P_{x|E_v}$ the distribution of the random variable $x$ conditioned on the event $E_v$.

Similarly,
for an edge~$e$ of the branching program $B$, let $E_e$ be
the event that $\T$ traverses the edge~$e$.
Denote, $\Pr(e) = \Pr(E_e)$
(where the probability is over $x,a_1,\ldots,a_m,b_1,...,b_m$), and
$\P_{x|e} = \P_{x|E_e}$.

\begin{claim} \label{claim-d0}
For any edge~$e = (v,u)$ of $B$, labeled by $(a,b)$, such that
$\Pr(e) > 0$, for any $x' \in \XX$,
$$
\P_{x|e} (x')  = \left\{
\begin{array}{ccccc}
  0
  & \;\;\;\; \mbox{if } & x' \in \Sig(v)  \\
  \P_{x|v} (x') (1+2\vareps) \cdot c_e^{-1}
  & \;\;\;\; \mbox{if } & x' \not \in \Sig(v)& \mbox{and} & M(a,x') = b\\
  \P_{x|v} (x') (1-2\vareps)\cdot c_e^{-1}
  & \;\;\;\; \mbox{if } & x' \not \in \Sig(v)& \mbox{and} & M(a,x') \neq b
\end{array}
\right.
$$
where $c_e$ is a normalization factor that satisfies,
$$
c_e \geq
1-4\cdot 2^{-2\rr}.
$$
\end{claim}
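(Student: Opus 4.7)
The plan is to compute $\P_{x|e}(x')$ directly from Bayes' rule, by decomposing the event $E_e$ as ``$\T$ reaches $v$, does not stop at $v$, and then reads $(a_{i+1},b_{i+1}) = (a,b)$.'' First I would observe that $\Pr(e) > 0$ forces $v$ to be non-significant and $a \notin \Bad(v)$, since otherwise $\T$ would stop at $v$ before traversing $e$. Conditioned on $E_v$ and $x = x'$, the transition along $e$ happens iff $x' \notin \Sig(v)$ (so $\T$ doesn't stop for reason~2), $a_{i+1} = a$ (probability $1/|\AA|$, independent of $x'$), and $b_{i+1} = b$, which happens with probability $\tfrac{1}{2}+\vareps$ if $M(a,x') = b$ and $\tfrac{1}{2}-\vareps$ otherwise. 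This immediately gives the three-case form of the claim with normalization $c_e = 2|\AA|\cdot\Pr(E_e\mid E_v)$.

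Next I would compute $c_e$ by summing $\P_{x|e}(x')$ to $1$. Writing $\sigma(x') := b\cdot M(a,x') \in \{-1,+1\}$,
\[
c_e \;=\; \sum_{x'\notin\Sig(v)} \bigl(1 + 2\vareps\,\sigma(x')\bigr)\,\P_{x|v}(x') \;=\; (1-S) + 2\vareps\,T,
\]
where $S := \Pr_x[x \in \Sig(v)\mid E_v]$ and $T := \sum_{x'\notin\Sig(v)} \sigma(x')\,\P_{x|v}(x')$. The key step is to bound $|T|$. Writing
\[
T \;=\; b \sum_{x' \in \XX} M(a,x')\,\P_{x|v}(x') \;-\; \sum_{x'\in\Sig(v)} \sigma(x')\,\P_{x|v}(x'),
\]
the first sum is exactly $b\cdot(M\cdot\P_{x|v})(a)$, which has absolute value at most $2^{-r'}$ because $a\notin\Bad(v)$, and the second is bounded in absolute value by $S$. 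Hence $|T|\le 2^{-r'} + S$.

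Finally I would bound $S$: Claim~\ref{claim-A0} (applicable since $v$ is non-significant) gives $S \le 2^{-2\rr}$, and the parameter setting in \eqref{eq:param setting} gives $r' \ge 2\rr$ so $2^{-r'}\le 2^{-2\rr}$. Using $2\vareps < 1$, I obtain
\[
c_e \;\ge\; (1-S) - 2\vareps(2^{-r'} + S) \;\ge\; 1 - 2S - 2^{-r'} \;\ge\; 1 - 3\cdot 2^{-2\rr} \;\ge\; 1 - 4\cdot 2^{-2\rr},
\]
completing the proof. The whole argument is essentially bookkeeping; the only non-trivial ingredient is the $L_2$-extractor condition, which enters exactly through the bound $|(M\cdot\P_{x|v})(a)|< 2^{-r'}$ guaranteed by the definition of $\Bad(v)$, and the Markov-type bound on the mass of significant values from Claim~\ref{claim-A0}. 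Note that the noise is what makes the multiplicative factor be $(1\pm 2\vareps)$ rather than the noiseless $\{0,2\}$ of~\cite{GRT18}; this controlled deviation is the structural reason we can later raise inner products to the power $k'/(2\vareps)$ instead of $k'$.
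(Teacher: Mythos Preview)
Your proposal is correct and follows essentially the same approach as the paper: observe that $\Pr(e)>0$ forces $v$ non-significant and $a\notin\Bad(v)$, apply Bayes' rule to get the three-case formula, then bound the normalization $c_e$ using Claim~\ref{claim-A0} and the definition of $\Bad(v)$ together with $r'\ge 2\rr$. The only difference is cosmetic: the paper bounds each of $\Pr_x[(x\notin\Sig(v))\wedge(M(a,x)=b')\mid E_v]$ from below by $\tfrac12-2^{-r'}-2^{-2\rr}$ via a union bound and then sums, whereas you write $c_e=(1-S)+2\vareps T$ and bound $|T|$ directly; both arrive at $c_e\ge 1-4\cdot 2^{-2\rr}$ (yours in fact gives the slightly sharper $1-3\cdot 2^{-2\rr}$).
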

\begin{proof}
Let $e = (v,u)$ be an edge of $B$, labeled by $(a,b)$, and such that
$\Pr(e) > 0$.
Since $\Pr(e) > 0$, the vertex $v$ is not significant (as otherwise $\T$ always stops on $v$ and hence $\Pr(e) = 0$).
Also, since $\Pr(e) > 0$, we know that $a \not \in \Bad(v)$
(as otherwise $\T$ never traverses~$e$ and hence $\Pr(e) = 0$).

If $\T$ reaches $v$, it  traverses the edge $e$ if and only if:
$x \not \in \Sig(v)$ (as otherwise $\T$ stops on~$v$) and $a_{i+1} = a$, $b_{i+1}=b$.
Therefore, by Bayes' rule, for any $x' \in \XX$,
$$
\P_{x|e} (x')  = \left\{
\begin{array}{ccccc}
  0
 & \;\;\;\; \mbox{if } & x' \in \Sig(v)  \\
  \P_{x|v} (x') (1+2\vareps) \cdot c_e^{-1}
  & \;\;\;\; \mbox{if } & x' \not \in \Sig(v)& \mbox{and} & M(a,x') = b\\
  \P_{x|v} (x') (1-2\vareps)\cdot c_e^{-1}
  & \;\;\;\; \mbox{if } & x' \not \in \Sig(v)& \mbox{and} & M(a,x') \neq b
\end{array}
\right.
$$
where $c_e$ is a normalization factor, given by
\begin{align*}
c_e&=
\sum_{\left\{ x' \; : \; x' \not \in \Sig(v) \; \wedge \; M(a,x') = b  \right\} }
\P_{x|v} (x') (1+2\vareps)+\sum_{\left\{ x' \; : \; x' \not \in \Sig(v) \; \wedge \; M(a,x') \neq b  \right\} }
\P_{x|v} (x') (1-2\vareps)
\\& = 
(1+2\vareps)\cdot \Pr_x[(x \not \in \Sig(v)) \wedge  (M(a,x) = b) \; | \;E_v]+(1-2\vareps)\cdot \Pr_x[(x \not \in \Sig(v)) \wedge  (M(a,x) \neq b) \; | \;E_v].
\end{align*}

Since $v$ is not significant, by Claim~\ref{claim-A0},
$$
\Pr_{x}
[x \in \Sig(v)  \; | \; E_v] \leq 2^{-2\rr}.
$$

Since $a \not \in \Bad(v)$,
$$
\left| \Pr_{x} [M(a,x) = 1  \; | \; E_v] -
\Pr_{x} [M(a,x) = -1  \; | \; E_v] \right|
=
\left| (M \cdot\P_{x|v})(a) \right|
\leq 2^{-r'},$$
and hence for every $b'\in\{-1,1\}$,
$$
\Pr_{x}
[M(a,x) = b'  \; | \; E_v] \geq \tfrac{1}{2} - 2^{-r'}.
$$
Hence, by the union bound,
$$
c_e\ge (1+2\vareps)\cdot (\tfrac{1}{2} - 2^{-r'}-2^{-2\rr})+(1-2\vareps)\cdot (\tfrac{1}{2} - 2^{-r'}-2^{-2\rr})\ge 1-4\cdot 2^{-2r}
$$
(where the last inequality follows since
$\rr \le r'/2$, by Equation~\eqref{eq:param setting}).
\end{proof}

\subsubsection*{Bounding the Norm of $\P_{x|s}$}

We will show that $\norm{\P_{x|s}}_{2}$ cannot be too large. Towards this, we will first prove that for every edge $e$ of $B$ that is traversed by $\T$ with probability larger than zero, $\norm{\P_{x|e}}_{2}$ cannot be too large.

\begin{claim} \label{claim-b0}
For any edge~$e$ of $B$, such that
$\Pr(e) > 0$,
$$ \norm{\P_{x|e}}_{2} \leq 4 \cdot 2^{\ell} \cdot 2^{-n}.$$
\end{claim}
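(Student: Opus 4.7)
The plan is to read off the explicit formula for $\P_{x|e}$ from Claim~\ref{claim-d0} and then use the fact that the vertex $v$ at the tail of $e$ must be non-significant (since $\Pr(e)>0$) to control $\norm{\P_{x|v}}_2$ directly.

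Fix an edge $e=(v,u)$ labeled $(a,b)$ with $\Pr(e)>0$. First I would note that $v$ cannot be significant (otherwise $\T$ stops at $v$ and $\Pr(e)=0$), so the non-significance bound gives $\norm{\P_{x|v}}_2 \leq 2^{\ell}\cdot 2^{-n}$. Claim~\ref{claim-d0} already identifies $\P_{x|e}$ as $\P_{x|v}$ re-weighted pointwise by either $(1+2\vareps)c_e^{-1}$, $(1-2\vareps)c_e^{-1}$, or $0$, with $c_e \geq 1-4\cdot 2^{-2\rr}$. The key observation is that for every $x' \in \XX$ we get the single uniform pointwise bound
$$\P_{x|e}(x') \;\leq\; (1+2\vareps)\,c_e^{-1}\,\P_{x|v}(x').$$

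Taking $\ell_2$-norms and using monotonicity, this immediately yields
$$\norm{\P_{x|e}}_2 \;\leq\; (1+2\vareps)\,c_e^{-1}\,\norm{\P_{x|v}}_2 \;\leq\; (1+2\vareps)\,c_e^{-1}\cdot 2^{\ell}\cdot 2^{-n}.$$
Now it just remains to absorb the prefactor into the constant $4$. Since $\vareps<\tfrac{1}{2}$ we have $1+2\vareps<2$, and since $\rr$ is sufficiently large we have $c_e \geq 1-4\cdot 2^{-2\rr} \geq \tfrac{1}{2}$, so $(1+2\vareps)c_e^{-1}\leq 4$. This gives the claimed bound.

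There is no real obstacle here — the only thing to be careful about is that Claim~\ref{claim-d0} is applied in the regime where $\rr$ is large enough for $c_e \geq 1/2$, which is implicit in the ``sufficiently large'' hypotheses of Theorem~\ref{thm:TM-main1}. The proof is essentially a one-line computation that leverages the pointwise description of $\P_{x|e}$ together with the non-significance of $v$.
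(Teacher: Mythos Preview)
Your proof is correct and follows essentially the same argument as the paper: both observe that $v$ is non-significant (so $\norm{\P_{x|v}}_2 \le 2^{\ell}\cdot 2^{-n}$), invoke Claim~\ref{claim-d0} to get the pointwise bound $\P_{x|e}(x') \le (1+2\vareps)c_e^{-1}\P_{x|v}(x')$, and then use $1+2\vareps < 2$ together with $c_e \ge 1 - 4\cdot 2^{-2\rr} > \tfrac12$ to conclude.
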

\begin{proof}
Let $e = (v,u)$ be an edge of $B$, labeled by $(a,b)$, and such that
$\Pr(e) > 0$.
Since $\Pr(e) > 0$, the vertex $v$ is not significant (as otherwise $\T$ always stops on $v$ and hence $\Pr(e) = 0$).
Thus,
$$
\norm{\P_{x|v}}_{2} \leq 2^{\ell} \cdot 2^{-n}.
$$

By Claim~\ref{claim-d0}, for any $x' \in \XX$,
$$
\P_{x|e} (x')  = \left\{
\begin{array}{ccccc}
  0
  & \;\;\;\; \mbox{if } & x' \in \Sig(v)  \\
  \P_{x|v} (x') (1+2\vareps) \cdot c_e^{-1}
  & \;\;\;\; \mbox{if } & x' \not \in \Sig(v)& \mbox{and} & M(a,x') = b\\
  \P_{x|v} (x') (1-2\vareps)\cdot c_e^{-1}
  & \;\;\;\; \mbox{if } & x' \not \in \Sig(v)& \mbox{and} & M(a,x') \neq b
\end{array}
\right.
$$
where $c_e$ is a normalization factor that satisfies,
$$
c_e \geq
1-4\cdot 2^{-2\rr}>\tfrac{1}{2}.
$$
(where the last inequality holds because we assume that $k',\ell',r'$ and thus $\rr$ are sufficiently large.)
Thus,
\[
\norm{\P_{x|e}}_{2} \leq c_e^{-1} \cdot (1+2\vareps)\norm{\P_{x|v}}_{2} \leq
4 \cdot 2^{\ell} \cdot 2^{-n}\qedhere
\]
\end{proof}

\begin{claim} \label{claim-b1}
$$ \norm{\P_{x|s}}_{2} \leq 4 \cdot 2^{\ell} \cdot 2^{-n}.$$
\end{claim}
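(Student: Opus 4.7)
The plan is to bound $\norm{\P_{x|s}}_2$ by writing $\P_{x|s}$ as a convex combination of the distributions $\P_{x|e}$ over the edges $e$ of $B$ entering $s$, and then invoking the triangle inequality for $\norm{\cdot}_2$ together with the per-edge bound established in Claim~\ref{claim-b0}.

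First I would dispose of the trivial cases. If $\Pr(s)=0$ the claim is vacuous. If $s$ is the start vertex of $B$ (i.e.\ the unique vertex in layer~$0$), then $\T$ reaches $s$ with probability $1$ and $\P_{x|s} = \U_X$, so $\norm{\P_{x|s}}_2 = 2^{-n}$, which is well below $4 \cdot 2^{\ell} \cdot 2^{-n}$. In the remaining main case, $s$ lies in some layer $i \ge 1$ and $\Pr(s) > 0$.

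In this main case, let $\Gamma_{in}(s)$ denote the collection of edges $e$ entering $s$ with $\Pr(e) > 0$. The event $E_s$ is exactly the disjoint union $\bigcup_{e \in \Gamma_{in}(s)} E_e$, since $\T$ reaches $s$ iff it traverses one of its incoming edges. Therefore $\Pr(s) = \sum_{e \in \Gamma_{in}(s)} \Pr(e)$, and for every $x' \in \XX$,
$$\P_{x|s}(x') \;=\; \sum_{e \in \Gamma_{in}(s)} \frac{\Pr(e)}{\Pr(s)} \cdot \P_{x|e}(x').$$
By the triangle inequality for $\norm{\cdot}_2$ (equivalently, convexity of the $\ell_2$ norm), and then applying Claim~\ref{claim-b0} termwise,
$$\norm{\P_{x|s}}_2 \;\le\; \sum_{e \in \Gamma_{in}(s)} \frac{\Pr(e)}{\Pr(s)} \cdot \norm{\P_{x|e}}_2 \;\le\; 4 \cdot 2^{\ell} \cdot 2^{-n},$$
which is the desired bound. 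I do not anticipate any obstacle here: the statement is essentially an immediate corollary of Claim~\ref{claim-b0} once one observes that conditioning on reaching a vertex mixes the conditional distributions of its incoming edges, and that $\norm{\cdot}_2$ is preserved under convex mixtures only with a loss of at most the maximum.
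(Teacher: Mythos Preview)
Your proof is correct and follows essentially the same approach as the paper: write $\P_{x|s}$ as the convex combination $\sum_{e \in \Gamma_{in}(s)} \tfrac{\Pr(e)}{\Pr(s)} \P_{x|e}$ and invoke Claim~\ref{claim-b0} termwise. The only cosmetic difference is that the paper applies Jensen's inequality to $t \mapsto t^2$ pointwise (obtaining $\norm{\P_{x|s}}_2^2 \le \sum_e \tfrac{\Pr(e)}{\Pr(s)} \norm{\P_{x|e}}_2^2$) and then bounds each summand, whereas you use the triangle inequality for $\norm{\cdot}_2$ directly; since every $\norm{\P_{x|e}}_2$ is bounded by the same constant, both routes yield the identical bound.
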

\begin{proof}
Let $\Gamma_{in}(s)$ be the set of all edges $e$ of $B$, that are going into $s$, such that $\Pr(e) >0$.
Note that $$\sum_{e \in \Gamma_{in}(s)} \Pr(e) = \Pr(s).$$

By the law of total probability,
for every $x' \in \XX$,
$$
\P_{x|s} (x') =
\sum_{e \in \Gamma_{in}(s)} \tfrac{\Pr(e)}{\Pr(s)} \cdot \P_{x|e} (x'),
$$
and hence by Jensen's inequality,
$$
\P_{x|s} (x')^2 \leq
\sum_{e \in \Gamma_{in}(s)} \tfrac{\Pr(e)}{\Pr(s)} \cdot \P_{x|e} (x')^2.
$$
Summing over $x' \in \XX$, we obtain,
$$
\norm{\P_{x|s}}_{2}^2 \leq
\sum_{e \in \Gamma_{in}(s)} \tfrac{\Pr(e)}{\Pr(s)} \cdot
\norm{\P_{x|e}}_{2}^2.
$$

By Claim~\ref{claim-b0},
for any $e \in \Gamma_{in}(s)$,
$$ \norm{\P_{x|e}}_{2}^2 \leq \left( 4 \cdot 2^{\ell} \cdot 2^{-n} \right)^2.$$
Hence,
\[
\norm{\P_{x|s}}_{2}^2 \leq \left( 4 \cdot 2^{\ell} \cdot 2^{-n} \right)^2.\qedhere
\]
\end{proof}

\subsubsection*{Similarity to a Target Distribution}

Recall that for two functions
$f,g: \XX \rightarrow \Reals^+$, we defined
$$
\langle f,g \rangle = \Ex_{z \in_R \XX} [ f(z) \cdot g(z) ].
$$
We think of $\langle f,g \rangle$ as a measure for the similarity between a function $f$ and a target function~$g$.
Typically $f,g$ will be distributions.

\begin{claim} \label{claim-s1}
$$\langle \P_{x|s},\P_{x|s} \rangle > 2^{2\ell} \cdot 2^{-2n}.$$
\end{claim}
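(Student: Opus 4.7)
The plan is to observe that this claim is essentially a direct unpacking of definitions together with the assumption that $s$ is significant. Specifically, by the definition of the inner product with respect to the uniform distribution on $\XX$,
$$\langle \P_{x|s}, \P_{x|s}\rangle \;=\; \Ex_{x' \in_R \XX}\!\left[\P_{x|s}(x')^2\right] \;=\; \norm{\P_{x|s}}_2^{\,2},$$
so the claim is equivalent to the statement $\norm{\P_{x|s}}_2 > 2^{\ell}\cdot 2^{-n}$.

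To establish this, I would simply invoke the standing hypothesis of Lemma~\ref{lemma-main1} that $s$ is a significant vertex. By the definition of ``significant'' given in the setup of the truncated-path (Section~\ref{section:mainlemma}), a vertex $v$ is significant precisely when $\norm{\P_{x|v}}_2 > 2^{\ell}\cdot 2^{-n}$. Applying this to $v = s$ and squaring both sides then yields $\langle \P_{x|s},\P_{x|s}\rangle > 2^{2\ell}\cdot 2^{-2n}$, as claimed.

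There is no real obstacle here: the bound on $\norm{\P_{x|s}}_2$ from Claim~\ref{claim-b1} is in the \emph{opposite} direction (an upper bound of $4\cdot 2^{\ell}\cdot 2^{-n}$), and is not used in this claim; the lower bound needed here comes solely from the definitional assumption that $s$ was chosen to be significant. Thus the proof is a two-line unpacking of the definitions of $\langle\cdot,\cdot\rangle$, $\norm{\cdot}_2$, and ``significant vertex,'' and is meant primarily to record the lower bound on $\langle \P_{x|s},\P_{x|s}\rangle$ in a form convenient for the later progress-measure arguments (where this quantity will be raised to the power $k/2\vareps$ and compared against the telescoping bound on $\mathcal{Z}_i$).
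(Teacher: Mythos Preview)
Your proposal is correct and matches the paper's proof essentially verbatim: the paper simply writes $\langle \P_{x|s},\P_{x|s}\rangle = \norm{\P_{x|s}}_2^2 > 2^{2\ell}\cdot 2^{-2n}$ by the significance of $s$. Your additional remarks about Claim~\ref{claim-b1} and the later use of this bound are accurate commentary but not part of the paper's (one-line) argument.
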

\begin{proof}
Since $s$ is significant,
\[
\langle \P_{x|s},\P_{x|s} \rangle =
\norm{\P_{x|s}}^2_{2} > 2^{2\ell} \cdot 2^{-2n}.\qedhere
\]
\end{proof}

\begin{claim} \label{claim-s2}
$$\langle \U_X,\P_{x|s} \rangle = 2^{-2n},$$
where $\U_X$ is the uniform distribution over $\XX$.
\end{claim}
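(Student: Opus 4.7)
The plan is to unpack the definitions directly; there is no real obstacle here, as this claim is simply a normalization identity for the inner product $\langle \cdot , \cdot \rangle$ when one of the arguments is the uniform distribution.

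First I would recall that $|\XX| = 2^n$ and hence $\U_X(x') = 2^{-n}$ for every $x' \in \XX$. Next I would substitute into the definition of the inner product from Section~\ref{sec:prelim}, which reads $\langle f, g \rangle = \Ex_{x' \in_R \XX}[f(x') \cdot g(x')]$. This yields
$$\langle \U_X, \P_{x|s} \rangle = \Ex_{x' \in_R \XX}\bigl[\U_X(x') \cdot \P_{x|s}(x')\bigr] = 2^{-n} \cdot \Ex_{x' \in_R \XX}\bigl[\P_{x|s}(x')\bigr].$$

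Finally, since $\P_{x|s}$ is a probability distribution on $\XX$, we have $\sum_{x' \in \XX} \P_{x|s}(x') = 1$, and so $\Ex_{x' \in_R \XX}[\P_{x|s}(x')] = \frac{1}{|\XX|} \sum_{x' \in \XX} \P_{x|s}(x') = 2^{-n}$. Multiplying the two factors of $2^{-n}$ gives $\langle \U_X, \P_{x|s}\rangle = 2^{-2n}$, as claimed.
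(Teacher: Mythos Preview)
Your proof is correct and takes essentially the same approach as the paper: both arguments just unfold the definition of $\langle \cdot,\cdot\rangle$, pull out the constant $\U_X(x')=2^{-n}$, and use that $\P_{x|s}$ sums to $1$. The paper compresses this into a single line, but the content is identical.
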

\begin{proof}
Since $\P_{x|s}$ is a distribution,
\[\langle \U_X,\P_{x|s} \rangle =  2^{-2n} \cdot \sum_{z \in \XX} \P_{x|s}(z) = 2^{-2n}.\qedhere\]
\end{proof}

\subsubsection*{Measuring the Progress}

For $i \in \{0,\ldots ,m\}$, let $L_i$ be the set of vertices $v$ in layer-$i$ of $B$,
such that $\Pr (v) >0$. For $i \in \{1,\ldots ,m\}$,
let $\Gamma_i$ be the set of edges $e$ from layer-$(i-1)$ of $B$ to layer-$i$ of $B$,
such that $\Pr (e) >0$.
Recall that $\kk = \frac{\gamma\ln 2}{2\vareps }k'$ (Equation~\eqref{eq:param setting2}).

For $i \in \{0,\ldots ,m\}$, let
$$
{\cal Z}_i =
\sum_{v \in L_i} \Pr(v) \cdot \langle \P_{x|v},\P_{x|s} \rangle^{\kk}.
$$
For $i \in \{1,\ldots ,m\}$, let
$$
{\cal Z}'_i =
\sum_{e \in \Gamma_i} \Pr(e) \cdot \langle \P_{x|e},\P_{x|s} \rangle^{\kk}.
$$

We think of ${\cal Z}_i, {\cal Z}'_i$ as measuring the progress made by the branching program, towards reaching a state with distribution similar to
$\P_{x|s}$.

For a vertex $v$ of $B$, let
$\Gamma_{out}(v)$ be the set of all edges $e$ of $B$, that are going out of $v$, such that $\Pr(e) >0$.
Note that $$\sum_{e \in \Gamma_{out}(v)} \Pr(e) \leq \Pr(v).$$
(We don't always have an equality here, since sometimes $\T$ stops on $v$).

The next four claims show that the progress made by the branching program is slow.

\begin{claim} \label{claim-p0}
For every vertex $v$ of $B$, such that $\Pr (v) > 0$,
$$
\sum_{e \in \Gamma_{out}(v)} \tfrac{\Pr(e)}{\Pr(v)}
\cdot \langle \P_{x|e},\P_{x|s} \rangle^{\kk}
\leq
\langle \P_{x|v},\P_{x|s} \rangle^{\kk} \cdot
\left( 1 + 2^{-\rr}\right)^{\kk}
+ \left( 2^{-2n +2} \right)^{\kk}.
$$
\end{claim}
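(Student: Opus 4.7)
The plan is to reduce to the non-significant case, rewrite $\inner{\P_{x|e},\P_{x|s}}$ using Claim~\ref{claim-d0} in terms of inner products involving a truncated product function, and then apply the $L_2$-extractor property to control what happens after traversing a random edge. First I observe that if $v$ is significant, the truncated path always stops on $v$, so $\Gamma_{out}(v)=\emptyset$ and the left-hand side vanishes; assume henceforth that $v$ is non-significant. Then outgoing edges are indexed by $(a,b)\in(\AA\setminus\Bad(v))\times\{-1,1\}$ with $\Pr(e_{a,b})/\Pr(v)=c_{e_{a,b}}/(2|\AA|)$. I define the truncated product $h^T(x'):=\P_{x|v}(x')\cdot\P_{x|s}(x')$ for $x'\notin\Sig(v)$ (and $0$ otherwise), together with $\mu_T:=\norm{h^T}_1$ and $\nu_a:=\inner{h^T,M_a}$. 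Claim~\ref{claim-d0} then rewrites each term as
$$\inner{\P_{x|e_{a,b}},\P_{x|s}}=c_{e_{a,b}}^{-1}\bigl(\mu_T+2\vareps\cdot b\cdot\nu_a\bigr),$$
with the useful bound $\mu_T\leq\inner{\P_{x|v},\P_{x|s}}$.

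Next, I bound $\norm{h^T}_2\leq 4\cdot 2^{3\ell+2\rr-2n}$ by extracting $\P_{x|v}\leq 2^{2\ell+2\rr-n}$ off $\Sig(v)$ (by definition of $\Sig$) as an $L_\infty$ factor and using $\norm{\P_{x|s}}_2\leq 4\cdot 2^{\ell-n}$ from Claim~\ref{claim-b1}. The argument then splits on $\mu_T=\norm{h^T}_1$. In Case A, if $\mu_T\leq 2^{-2n}$, then $|\nu_a|\leq\mu_T$ (since $\norm{M_a}_\infty=1$) and $c_e\geq 1/2$ force $\inner{\P_{x|e},\P_{x|s}}\leq 2^{-2n+2}$ for every outgoing edge, so the whole sum is at most $(2^{-2n+2})^\kk$, matching the additive term on the right-hand side. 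In Case B, if $\mu_T>2^{-2n}$, the parameter relations $3\ell<\ell'$ (Equation~\eqref{eq:param setting2}) and $2\rr\leq(1-\gamma)\ell'$ (Equation~\eqref{eq:param setting}) together with the $\norm{h^T}_2$ bound give $\norm{h^T}_2/\norm{h^T}_1\leq 2^{\ell'}$, so the $(k',\ell')$-$L_2$-extractor property applied to $h^T$ yields that all but a $2^{-k'}$ fraction of rows $a$ (the good rows) satisfy $|\nu_a|\leq 2^{-r'}\mu_T$; for these, $\mu_T+2\vareps b\nu_a\leq\mu_T(1+2^{-\rr})$ using $r'\geq 2\rr$ and $2\vareps<1$, while for the remaining bad rows I fall back to the trivial $\mu_T+2\vareps b\nu_a\leq(1+2\vareps)\mu_T$.

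Combining the good and bad contributions, and using $c_e^{1-\kk}=1+o(1)$ for sufficiently large $\rr$, I expect the sum to be bounded by
$$\bigl((1+2^{-\rr})^\kk+2^{-k'}(1+2\vareps)^\kk\bigr)\mu_T^\kk\cdot(1+o(1)),$$
and the hard part will be absorbing the bad-row term into the good one. This is the precise step where the $1/\vareps$ improvement over~\cite{GRT18} enters: with the new exponent $\kk=\gamma\ln 2/(2\vareps)\cdot k'$ from Equation~\eqref{eq:param setting2}, $(1+2\vareps)^\kk\leq e^{2\vareps\kk}=2^{\gamma k'}$, so the bad-row contribution equals $2^{(\gamma-1)k'}\mu_T^\kk$, which is absorbed into $(1+2^{-\rr})^\kk\mu_T^\kk$ because $\rr\leq(1-\gamma)k'/2$. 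A larger $\kk$ would blow up $(1+2\vareps)^\kk$ past $2^{k'}$ and destroy this cancellation, while a smaller $\kk$ would weaken the final memory lower bound; substituting $\mu_T\leq\inner{\P_{x|v},\P_{x|s}}$ then yields the claim.
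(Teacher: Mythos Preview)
Your proposal is correct and follows essentially the same approach as the paper: the same truncated product function (the paper calls it $f$, you call it $h^T$), the same two-case split on $\norm{f}_1$ versus $2^{-2n}$, the same $L_2$-extractor application in Case~II, and the same key computation $(1+2\vareps)^{\kk}\le e^{2\vareps\kk}=2^{\gamma k'}$ to absorb the bad-row contribution using $\rr\le(1-\gamma)k'/2$. The only organizational difference is that the paper bounds $c_e^{-1}\le 1+2^{-2\rr+3}$ upfront (yielding the explicit factor $(1+2^{-2\rr+3})^{\kk}$), whereas you carry $c_e^{1-\kk}$ to the end; your phrase ``$c_e^{1-\kk}=1+o(1)$'' is slightly imprecise since this factor is not $1+o(1)$ but rather of order $(1+2^{-2\rr+3})^{\kk}$, which is then absorbed into $(1+2^{-\rr})^{\kk}$ exactly as in the paper.
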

\begin{proof}
If $v$ is significant or $v$ is a leaf, then $\T$ always stops on $v$ and hence
$\Gamma_{out}(v)$ is empty and thus the left hand side is equal to zero and the right hand side is positive, so the claim follows trivially.
Thus, we can assume that $v$ is not significant and is not a leaf.

Define $P: \XX \rightarrow \Reals^+$ as follows.
For any $x' \in \XX$,
$$
P (x')  = \left\{
\begin{array}{ccc}
  0
  & \;\;\;\; \mbox{if } & x' \in \Sig(v)  \\
  \P_{x|v} (x')
  & \;\;\;\; \mbox{if } & x' \not \in \Sig(v)
\end{array}
\right.
$$
Note that by the definition of
$\Sig(v)$,  for any $x' \in \XX$,
\begin{equation} \label{e11}
P(x') \leq 2^{2\ell + 2\rr} \cdot 2^{-n}.
\end{equation}

Define $f: \XX \rightarrow \Reals^+$ as follows.
For any $x' \in \XX$,
$$
f(x')  = P(x') \cdot  \P_{x|s}(x').
$$
By Claim~\ref{claim-b1} and Equation~\eqref{e11},
\begin{equation} \label{e12}
\norm{f}_{2} \leq
2^{2\ell + 2\rr} \cdot 2^{-n} \cdot
\norm{\P_{x|s}}_{2} \leq
2^{2\ell + 2\rr}
\cdot 2^{-n} \cdot 4 \cdot 2^{\ell} \cdot 2^{-n}
= 2^{3\ell + 2\rr +2}
\cdot 2^{-2n}.
\end{equation}

By Claim~\ref{claim-d0},
for any edge~$e \in \Gamma_{out}(v)$, labeled by $(a,b)$, for any $x' \in \XX$,
$$
\P_{x|e} (x')  = \left\{
\begin{array}{ccccc}
  0
  & \;\;\;\; \mbox{if } & x' \in \Sig(v)  \\
  \P_{x|v} (x') (1+2\vareps) \cdot c_e^{-1}
  & \;\;\;\; \mbox{if } & x' \not \in \Sig(v)& \mbox{and} & M(a,x') = b\\
  \P_{x|v} (x') (1-2\vareps)\cdot c_e^{-1}
  & \;\;\;\; \mbox{if } & x' \not \in \Sig(v)& \mbox{and} & M(a,x') \neq b
\end{array}
\right.
$$
where $c_e$ is a normalization factor that satisfies,
$$
c_e \geq
1-4\cdot 2^{-2\rr}.
$$
Therefore,
for any edge~$e \in \Gamma_{out}(v)$, labeled by $(a,b)$, for any $x' \in \XX$,
$$
\P_{x|e} (x') \cdot \P_{x|s} (x') = 
  f(x') \cdot (1+2\vareps \cdot b\cdot M(a,x'))\cdot c_e^{-1}
  $$
and hence, we have
\begin{align}
\nonumber
\langle \P_{x|e},\P_{x|s} \rangle &=
\Ex_{x' \in_R \XX} [ \P_{x|e}(x') \cdot \P_{x|s}(x') ]
=
\Ex_{x' \in_R \XX}
[ f(x') \cdot (1+2\vareps \cdot b\cdot M(a,x'))\cdot c_e^{-1}]
\\
\nonumber
&=
\left(\norm{f}_1 + 2\vareps \cdot b \cdot \inner{M_a, f}\right) \cdot (c_e)^{-1}
\\
\label{e13}
&<
\left( \norm{f}_1+2\vareps |\inner{M_a, f}| \right)
\cdot \left(1 + 2^{-2\rr + 3}\right)
\end{align}
(where the last inequality holds by the bound that we have on $c_e$, because we assume that $k',\ell',r'$ and thus $\rr$ are sufficiently large).

We will now consider two cases:

\subsubsection*{Case I: $\norm{f}_1 <  2^{-2n}$}
In this case, we bound  $|\inner{M_a, f}| \leq \norm{f}_1$
(since $f$ is non-negative and the entries of $M$ are in~$\{-1,1\}$)
and
$(1 + 2^{-2\rr +3}) < 2$ (since we assume that $k',\ell',r'$ and thus $\rr$ are sufficiently large) and obtain
for any edge~$e \in \Gamma_{out}(v)$,
$$\langle \P_{x|e},\P_{x|s} \rangle
< 4  \cdot 2^{-2n}.$$
Since $\sum_{e \in \Gamma_{out}(v)} \tfrac{\Pr(e)}{\Pr(v)} \leq 1$,
Claim~\ref{claim-p0} follows, as the left hand side of the claim is smaller than the second term on the right hand side.

\subsubsection*{Case II: $\norm{f}_1 \geq 2^{-2n}$}
For every $a \in \AA$, define
$$t(a) = \frac{|\inner{M_a, f}|}{\norm{f}_1}.$$
By Equation~\eqref{e13},
\begin{equation}\label{e14}
\langle \P_{x|e},\P_{x|s} \rangle^{\kk} <
\norm{f}_1^{\kk} \cdot
\left( 1 + 2\vareps\cdot t(a) \right)^{\kk}
\cdot \left( 1 + 2^{-2\rr + 3}\right)^{\kk}.
\end{equation}

Note that by the definitions of $P$ and $f$,
$$
\norm{f}_1 = \Ex_{x' \in_R \XX} [f(x')] =
\langle P,\P_{x|s} \rangle \leq \langle \P_{x|v},\P_{x|s} \rangle.
$$
Note also that for every $a \in \AA$, there is at most one edge
$e_{(a,1)} \in \Gamma_{out}(v)$, labeled by $(a,1)$, and
at most one edge
$e_{(a,-1)} \in \Gamma_{out}(v)$, labeled by $(a,-1)$,
and we have
$$\tfrac{\Pr(e_{(a,1)})}{\Pr(v)} + \tfrac{\Pr(e_{(a,-1)})}{\Pr(v)}
\leq \tfrac{1}{|A|},$$
since $\tfrac{1}{|A|}$ is the probability that the next sample read by the program is $a$.
%
Thus, summing over all $e \in \Gamma_{out}(v)$, by Equation~\eqref{e14},
\begin{equation}\label{e15}
\sum_{e \in \Gamma_{out}(v)}
\tfrac{\Pr(e)}{\Pr(v)}  \cdot \langle \P_{x|e},\P_{x|s} \rangle^{\kk} <
\langle \P_{x|v},\P_{x|s} \rangle ^{\kk} \cdot
\Ex_{a \in_R \AA} \left[ \left( 1 + 2\vareps\cdot t(a) \right)^{\kk} \right]
\cdot \left( 1 + 2^{-2\rr +3}\right)^{\kk}.
\end{equation}

It remains to bound
\begin{equation} \label{e16}
\Ex_{a \in_R \AA} \left[ \left( 1 +2\vareps\cdot t(a)\right)^{\kk} \right],
\end{equation}
using the properties of the matrix $M$ and the bounds on the $\ell_2$ versus $\ell_1$ norms of $f$.

By Equation~\eqref{e12}, the assumption that $\norm{f}_1 \ge 2^{-2n}$, Equation~\eqref{eq:param setting} and Equation~\eqref{eq:param setting2}, we get
\begin{equation*} 
\frac{\norm{f}_{2}}{\norm{f}_1} \leq
2^{3\ell + 2\rr +2} \le 2^{\ell'}\;.
\end{equation*}
Since $M$ is a $(k',\ell')$-$L_2$-extractor with error $2^{-r'}$,
 there are at most $2^{-k'} \cdot |A|$ rows $a\in \AA$ with
$t(a)  = \frac{|\inner{M_a,f}|}{\norm{f}_1} \ge 2^{-r'}$.
We bound the expectation in Equation~\eqref{e16}, by splitting
the expectation into two sums
\begin{equation}
	\label{e18}
\Ex_{a \in_R \AA} \left[ \left( 1 + 2\vareps\cdot t(a) \right)^{\kk} \right]
= \tfrac{1}{|A|} \cdot  \sum_{a \;: \; t(a) \leq 2^{-r'}}
\left( 1 + 2\vareps\cdot t(a) \right)^{\kk}
+
\tfrac{1}{|A|} \cdot  \sum_{a \;: \; t(a) > 2^{-r'}}
\left( 1 + 2\vareps\cdot t(a)\right)^{\kk}.
\end{equation}

We bound the first sum in Equation~\eqref{e18} by $(1+2\vareps\cdot 2^{-r'})^{\kk}$.
As for the second sum in Equation~\eqref{e18}, we
know that it is a sum of at most $2^{-k'} \cdot |\AA|$ elements,
and since for every $a \in \AA$, we have $t(a) \leq 1$, we have
\begin{equation*} 
\tfrac{1}{|A|} \cdot  \sum_{a \;: \; t(a) > 2^{-r'}}
\left( 1 + 2\vareps\cdot t(a)\right)^{\kk} \leq
2^{-k'} \cdot (1+2\vareps)^{\kk}
\le 2^{-k'}e^{2\vareps k}
\le
2^{-2\rr}\;
\end{equation*}
(where in the last inequality we used Equations~\eqref{eq:param setting} and~\eqref{eq:param setting2}).
Overall, using Equation~\eqref{eq:param setting} again, we get
\begin{equation}\label{e18.5}
\Ex_{a \in_R \AA} \left[ \left( 1 +2\vareps\cdot  t(a) \right)^{\kk} \right] \le (1+2\vareps\cdot 2^{-r'})^{\kk} + 2^{-2\rr} \le (1+2^{-2\rr})^{\kk+1}.
\end{equation}
Substituting
Equation~\eqref{e18.5} into Equation~\eqref{e15}, we obtain
\begin{align*}
\sum_{e \in \Gamma_{out}(v)}
\tfrac{\Pr(e)}{\Pr(v)}  \cdot \langle \P_{x|e},\P_{x|s} \rangle^{\kk} &<
\langle \P_{x|v},\P_{x|s} \rangle ^{\kk}
\cdot  \left(1 + 2^{-2\rr}\right)^{\kk+1}
\cdot \left( 1 + 2^{-2\rr +3}\right)^{\kk}
\\
&<
\langle \P_{x|v},\P_{x|s} \rangle ^{\kk}
\cdot \left( 1 + 2^{-\rr}\right)^{\kk}
\end{align*}
(where the last inequality uses the assumption that $\rr$ is sufficiently large).
This completes the proof of Claim~\ref{claim-p0}.
\end{proof}

\begin{claim} \label{claim-p1}
For every $i \in \{1,\ldots ,m\}$,
$${\cal Z}'_i \leq  {\cal Z}_{i-1}
\cdot
\left( 1 + 2^{-\rr}\right)^{\kk}
+ \left( 2^{-2n +2} \right)^{\kk}.
$$\end{claim}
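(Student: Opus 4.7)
The plan is to derive Claim~\ref{claim-p1} as a direct aggregation of Claim~\ref{claim-p0} over all vertices in layer $i-1$. The per-vertex bound already packages all the difficult analytic work (the $L_2$-extractor argument, the split into the $\norm{f}_1$-small and $\norm{f}_1$-large cases, and the $(1+2\vareps)^{k}$ controlling power), so here nothing more than a clean bookkeeping step and a union bound should be required.

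First I would rewrite ${\cal Z}'_i$ by partitioning the edge set $\Gamma_i$ according to source vertex. Every edge $e\in\Gamma_i$ leaves some vertex $v\in L_{i-1}$, and conversely any edge in $\Gamma_{out}(v)$ for $v\in L_{i-1}$ lies in $\Gamma_i$, so
\[
{\cal Z}'_i = \sum_{v\in L_{i-1}} \sum_{e\in\Gamma_{out}(v)} \Pr(e)\cdot\langle \P_{x|e},\P_{x|s}\rangle^{\kk}.
\]
For each $v\in L_{i-1}$ (which has $\Pr(v)>0$), I multiply and divide by $\Pr(v)$ inside and apply Claim~\ref{claim-p0}:
\[
\sum_{e\in\Gamma_{out}(v)} \Pr(e)\cdot\langle \P_{x|e},\P_{x|s}\rangle^{\kk} \le \Pr(v)\cdot\Bigl[\langle \P_{x|v},\P_{x|s}\rangle^{\kk}(1+2^{-\rr})^{\kk} + (2^{-2n+2})^{\kk}\Bigr].
\]

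Summing this inequality over $v\in L_{i-1}$ splits the right-hand side into two pieces. The first piece is exactly $(1+2^{-\rr})^{\kk}\cdot {\cal Z}_{i-1}$ by the definition of ${\cal Z}_{i-1}$. For the second piece I factor out $(2^{-2n+2})^{\kk}$ and use $\sum_{v\in L_{i-1}} \Pr(v)\le 1$ (since the events $E_v$ for distinct $v$ in the same layer are disjoint). This yields
\[
{\cal Z}'_i \le {\cal Z}_{i-1}\cdot (1+2^{-\rr})^{\kk} + (2^{-2n+2})^{\kk},
\]
which is exactly the claim.

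There is essentially no obstacle here: the only subtlety is confirming that the partition of $\Gamma_i$ by source vertex covers every relevant edge and that the total probability across a single layer is bounded by $1$. Both facts follow immediately from the definition of the truncated path and the layer structure of $B$.
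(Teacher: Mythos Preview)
Your proposal is correct and is essentially identical to the paper's own proof: the paper also partitions $\Gamma_i$ by source vertex in $L_{i-1}$, applies Claim~\ref{claim-p0} to each summand, and then uses $\sum_{v\in L_{i-1}}\Pr(v)\le 1$ to collapse the additive error term.
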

\begin{proof}
By Claim~\ref{claim-p0},
\begin{align*}
{\cal Z}'_i =
\sum_{e \in \Gamma_i} \Pr(e) \cdot \langle \P_{x|e},\P_{x|s} \rangle^{\kk}
&=
\sum_{v \in L_{i-1}} \Pr(v) \cdot
\sum_{e \in \Gamma_{out}(v)} \tfrac{\Pr(e)}{\Pr(v)}
\cdot \langle \P_{x|e},\P_{x|s} \rangle^{\kk}
\\&\leq
\sum_{v \in L_{i-1}} \Pr(v) \cdot
\left(
\langle \P_{x|v},\P_{x|s} \rangle^{\kk} \cdot
\left( 1 + 2^{-\rr}\right)^{\kk}
+ \left( 2^{-2n +2} \right)^{\kk}
\right)
\\&=
{\cal Z}_{i-1} \cdot
\left( 1 + 2^{-\rr}\right)^{\kk} +
\sum_{v \in L_{i-1}} \Pr(v) \cdot
\left( 2^{-2n +2} \right)^{\kk}
\\&\leq
{\cal Z}_{i-1} \cdot
\left( 1 + 2^{-\rr}\right)^{\kk} +
\left( 2^{-2n +2} \right)^{\kk}\qedhere
\end{align*}
\end{proof}

\begin{claim} \label{claim-p2}
For every $i \in \{1,\ldots ,m\}$,
$$
{\cal Z}_{i} \leq
{\cal Z}'_i.
$$
\end{claim}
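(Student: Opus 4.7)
The plan is to prove this by a direct application of Jensen's inequality (convexity of $t \mapsto t^{\kk}$, which is valid since $\kk \geq 1$) on a per-vertex basis, then sum over vertices in layer $i$.

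First, I would fix any vertex $v \in L_i$ and let $\Gamma_{in}(v)$ denote the set of edges $e \in \Gamma_i$ going into $v$ with $\Pr(e) > 0$. By the law of total probability (conditioning on which of the edges into $v$ was traversed), for every $x' \in \XX$,
\[
\P_{x|v}(x') \;=\; \sum_{e \in \Gamma_{in}(v)} \frac{\Pr(e)}{\Pr(v)}\, \P_{x|e}(x'),
\]
where the coefficients $\Pr(e)/\Pr(v)$ are nonnegative and sum to $1$ (since every trajectory of $\T$ reaching $v$ arrives via exactly one incoming edge).

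Next, by linearity of the inner product in its first argument,
\[
\langle \P_{x|v}, \P_{x|s}\rangle \;=\; \sum_{e \in \Gamma_{in}(v)} \frac{\Pr(e)}{\Pr(v)}\, \langle \P_{x|e}, \P_{x|s}\rangle.
\]
Since $\kk = \tfrac{\gamma \ln 2}{2\vareps} k' \geq 1$ (by the assumption that $k'$ is sufficiently large), the map $t \mapsto t^{\kk}$ is convex on $\Reals^+$, and Jensen's inequality applied to the convex combination above gives
\[
\langle \P_{x|v}, \P_{x|s}\rangle^{\kk} \;\leq\; \sum_{e \in \Gamma_{in}(v)} \frac{\Pr(e)}{\Pr(v)}\, \langle \P_{x|e}, \P_{x|s}\rangle^{\kk}.
\]
Multiplying by $\Pr(v)$ and summing over $v \in L_i$, and noting that each edge $e \in \Gamma_i$ enters exactly one vertex in $L_i$ (so the double sum $\sum_{v \in L_i} \sum_{e \in \Gamma_{in}(v)}$ is precisely $\sum_{e \in \Gamma_i}$), yields
\[
{\cal Z}_i \;=\; \sum_{v \in L_i} \Pr(v)\, \langle \P_{x|v}, \P_{x|s}\rangle^{\kk} \;\leq\; \sum_{e \in \Gamma_i} \Pr(e)\, \langle \P_{x|e}, \P_{x|s}\rangle^{\kk} \;=\; {\cal Z}'_i,
\]
completing the proof. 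There is essentially no obstacle here; the only point that needs a moment of care is verifying that $\kk \geq 1$ so that Jensen applies in the correct direction, and that the reindexing from $(v, e \in \Gamma_{in}(v))$ to $e \in \Gamma_i$ is a bijection, both of which are immediate from the setup.
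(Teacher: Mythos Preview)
Your proposal is correct and follows essentially the same argument as the paper's own proof: express $\P_{x|v}$ as a convex combination of the $\P_{x|e}$ over incoming edges, pass to inner products by linearity, apply Jensen's inequality for $t\mapsto t^{\kk}$, and sum over $v\in L_i$. Your explicit verification that $\kk\ge 1$ and that the reindexing $\sum_{v}\sum_{e\in\Gamma_{in}(v)}=\sum_{e\in\Gamma_i}$ is a bijection are small clarifications the paper leaves implicit, but the approach is the same.
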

\begin{proof}
For any $v \in L_i$,
let $\Gamma_{in}(v)$ be the set of all edges $e \in \Gamma_i$, that are going into $v$.
Note that $$\sum_{e \in \Gamma_{in}(v)} \Pr(e) = \Pr(v).$$

By the law of total probability,
for every $v \in L_i$ and every $x' \in \XX$,
$$
\P_{x|v} (x') =
\sum_{e \in \Gamma_{in}(v)} \tfrac{\Pr(e)}{\Pr(v)} \cdot \P_{x|e} (x'),
$$
and hence
$$
\langle \P_{x|v},\P_{x|s} \rangle =
\sum_{e \in \Gamma_{in}(v)} \tfrac{\Pr(e)}{\Pr(v)} \cdot
\langle \P_{x|e},\P_{x|s} \rangle.
$$
Thus,
by Jensen's inequality,
$$
\langle \P_{x|v},\P_{x|s} \rangle  ^{\kk}
\leq
\sum_{e \in \Gamma_{in}(v)} \tfrac{\Pr(e)}{\Pr(v)} \cdot
\langle \P_{x|e},\P_{x|s} \rangle ^{\kk}.
$$

Summing over all $v \in L_i$, we get
$$
{\cal Z}_{i} =
\sum_{v \in L_{i}} \Pr(v) \cdot
\langle \P_{x|v},\P_{x|s} \rangle  ^{\kk}
\leq
\sum_{v \in L_{i}} \Pr(v) \cdot
\sum_{e \in \Gamma_{in}(v)} \tfrac{\Pr(e)}{\Pr(v)} \cdot
\langle \P_{x|e},\P_{x|s} \rangle ^{\kk}
$$
\[
=
\sum_{e \in \Gamma_i} \Pr(e) \cdot \langle \P_{x|e},\P_{x|s} \rangle^{\kk}
=
{\cal Z}'_i.\qedhere
\]
\end{proof}

\begin{claim} \label{claim-p3}
For every $i \in \{1,\ldots ,m\}$,
$$
{\cal Z}_i \leq
2^{4\kk + 2\rr} \cdot 2^{-2\kk \cdot n}.
$$
\end{claim}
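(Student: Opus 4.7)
The plan is to prove the bound by induction on $i$, using Claim~\ref{claim-p2} and Claim~\ref{claim-p1} in tandem to get a recurrence for ${\cal Z}_i$, and then unrolling it over the $m = 2^{\rr}$ layers of~$B$.

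For the base case $i=0$, the only vertex in $L_0$ is the start vertex, which is reached by $\T$ with probability $1$, and on which $x$ is still uniformly distributed. Thus $\P_{x\mid v_0} = \U_X$ and, by Claim~\ref{claim-s2},
$$
{\cal Z}_0 \;=\; \langle \U_X,\P_{x|s}\rangle^{\kk} \;=\; \bigl(2^{-2n}\bigr)^{\kk} \;=\; 2^{-2\kk n},
$$
which is well within the target.

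For the inductive step, chaining Claim~\ref{claim-p2} and Claim~\ref{claim-p1} gives
$$
{\cal Z}_i \;\leq\; {\cal Z}'_i \;\leq\; {\cal Z}_{i-1} \cdot A \;+\; B, \qquad A := (1+2^{-\rr})^{\kk}, \quad B := (2^{-2n+2})^{\kk} = 2^{2\kk} \cdot 2^{-2\kk n}.
$$
Unrolling this linear recurrence starting from ${\cal Z}_0 = 2^{-2\kk n}$ yields
$$
{\cal Z}_i \;\leq\; A^i \cdot {\cal Z}_0 \;+\; B \cdot \sum_{j=0}^{i-1} A^j \;\leq\; A^i \cdot 2^{-2\kk n} \;+\; i \cdot A^i \cdot B.
$$
The main step is to control $A^i$ for $i \leq m = 2^{\rr}$. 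Since $(1+2^{-\rr})^{2^{\rr}} \leq e$, we have
$$
A^i \;=\; (1+2^{-\rr})^{\kk \cdot i} \;\leq\; (1+2^{-\rr})^{\kk \cdot 2^{\rr}} \;\leq\; e^{\kk} \;\leq\; 2^{2\kk}.
$$
Substituting this and $i \le 2^{\rr}$ gives
$$
{\cal Z}_i \;\leq\; 2^{2\kk} \cdot 2^{-2\kk n} \;+\; 2^{\rr} \cdot 2^{2\kk} \cdot 2^{2\kk} \cdot 2^{-2\kk n} \;\leq\; 2^{4\kk+2\rr} \cdot 2^{-2\kk n},
$$
as desired.

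The only subtle point in the plan is that the accumulated multiplicative factor $A^i$ stays bounded: this is precisely where the assumption $m \leq 2^{\rr}$ on the length of $B$ is used, since $(1+2^{-\rr})^{\kk \cdot 2^{\rr}}$ is $2^{O(\kk)}$ but would blow up if $m$ were any larger than $2^{O(\rr)}$. Everything else (base case, telescoping of the additive error, geometric-sum bound) is routine.
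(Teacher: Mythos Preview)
Your proof is correct and follows essentially the same approach as the paper: both establish ${\cal Z}_0 = 2^{-2\kk n}$ via Claim~\ref{claim-s2}, combine Claims~\ref{claim-p1} and~\ref{claim-p2} into the recurrence ${\cal Z}_i \le A\cdot{\cal Z}_{i-1}+B$, unroll it, and bound the accumulated factor $(1+2^{-\rr})^{\kk m}$ by $e^{\kk}\le 2^{2\kk}$ using $m=2^{\rr}$. The only cosmetic difference is that the paper absorbs ${\cal Z}_0$ into the $B$ term (since ${\cal Z}_0\le B$) to write the unrolled bound as $B\cdot(m+1)\cdot A^m$, whereas you keep the two terms separate; the arithmetic comes out the same.
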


\begin{proof}
By Claim~\ref{claim-s2}, ${\cal Z}_0 = (2^{-2n})^{\kk}$.
By Claim~\ref{claim-p1} and Claim~\ref{claim-p2}, for every
$i \in \{1,\ldots ,m\}$,
$$
{\cal Z}_i \leq  {\cal Z}_{i-1}
\cdot
\left( 1 + 2^{-\rr}\right)^{\kk}
+ \left( 2^{-2n +2} \right)^{\kk}.
$$
Hence, for every
$i \in \{1,\ldots ,m\}$,
$$
{\cal Z}_i \leq  \left( 2^{-2n +2} \right)^{\kk}
\cdot
(m+1) \cdot
\left( 1 + 2^{-\rr}\right)^{\kk m}.
$$
Since $m = 2^{\rr}$,
\[
{\cal Z}_i \leq
2^{-2\kk \cdot n} \cdot 2^{2\kk} \cdot (2^{\rr} +1)\cdot e^{\kk}
\leq
2^{-2\kk \cdot n} \cdot 2^{4\kk + 2\rr}.\qedhere\]
\end{proof}

\subsubsection*{Proof of Lemma~\ref{lemma-main1}}

We can now complete the proof of Lemma~\ref{lemma-main1}.
Assume that $s$ is in layer-$i$ of $B$.
By Claim~\ref{claim-s1},
$${\cal Z}_i \geq \Pr(s) \cdot \langle \P_{x|s},\P_{x|s} \rangle ^{\kk}
> \Pr(s) \cdot \left( 2^{2\ell} \cdot 2^{-2n} \right)^{\kk}
= \Pr(s) \cdot 2^{2\ell \cdot \kk} \cdot 2^{-2\kk \cdot n}.$$
On the other hand, by Claim~\ref{claim-p3},
$$
{\cal Z}_i \leq  2^{4\kk + 2\rr} \cdot 2^{-2\kk \cdot n}.
$$
Thus, using Equation~\eqref{eq:param setting} and Equation~\eqref{eq:param setting2}, we get
$$
\Pr(s) \leq
2^{4\kk + 2\rr} \cdot
2^{-2\ell \cdot \kk}
\le 2^{\frac{2 k'}{\vareps}} \cdot 2^{-\frac{ \gamma^2\ln 2}{3\vareps} (k'\ell')}.
$$


Recall that we assumed that the width of $B$ is at most $2^{c k' \ell'/\vareps}$ for some constant $c<\ln 2/3$,
and that the length of $B$ is at most $2^{\rr}$.
Recall that we fixed
$\gamma$ such that $\gamma^2(\ln2)/3 > c$.
Taking a union bound over at most $2^{\rr} \cdot 2^{c k' \ell'/\vareps} \le 2^{k'} \cdot 2^{c k' \ell'/\vareps}$ significant vertices of $B$, we conclude that the probability that $\T$ reaches any significant vertex is at most $2^{-\Omega(k' \ell'/\vareps)}$.
Since we assume that $k'$ and $\ell'$ are sufficiently large,
$2^{-\Omega(k' \ell'/\vareps)}$
is certainly at most $2^{-k'}$, which is at most $2^{-\rr}$.
\end{proof}

\begin{corollary} \label{cor:main1}
Let $\XX$, $\AA$ be two finite sets.
Let $M: \AA \times \XX \rightarrow \{-1,1\}$ be a matrix.
Assume that $k,\ell, r \in \N$ are large enough and such that any submatrix of $M$ of at least $2^{-k} \cdot |A|$ rows and at least
$2^{-\ell} \cdot |X|$ columns, has a bias of at most $2^{-r}$.

Then,
any learning algorithm for the learning problem corresponding to $M$ with error parameter $\vareps$, requires either a memory
of size at least
$\Omega\left(\frac{k \cdot  \ell}{\vareps} \right)$, or at least $2^{\Omega(r)}$ samples.
The result holds even if the learner has an exponentially small success probability (of $2^{-\Omega(r)}$).
\end{corollary}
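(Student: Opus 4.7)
The plan is to deduce Corollary~\ref{cor:main1} from Theorem~\ref{thm:TM-main1} by recasting the submatrix-bias hypothesis as an $L_2$-extractor property, using the $L_\infty$-extractor of Definition~\ref{definition:min-extractor} as a bridge. Since Theorem~\ref{thm:TM-main1} already delivers the tradeoff $\Omega(k'\ell'/\vareps)$-memory versus $2^{\Omega(\min\{k',\ell',r'\})}$-samples for any $(k',\ell')$-$L_2$-extractor with error $2^{-r'}$, it suffices to exhibit such parameters with $k' = \Omega(k)$, $\ell' = \Omega(\ell)$, and $r' = \Omega(r)$.

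The first step is to observe that the submatrix hypothesis is, up to constants, the $L_\infty$-extractor condition. A uniform distribution on a subset of $\AA$ of density at least $2^{-k}$ has min-entropy at least $\log|\AA|-k$, and likewise on the column side. The submatrix bias is exactly $|\sum_{a,x}p_a(a)p_x(x)M(a,x)|$ for such flat $p_a$, $p_x$. Any distribution with min-entropy $\geq h$ is a convex combination of flat distributions of min-entropy $\geq h$, and the bilinear form is linear in each argument, so the $L_\infty$-extractor property follows on all qualifying $(p_a,p_x)$.

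The second step is to upgrade the $L_\infty$-extractor into an $L_2$-extractor, following the template of~\cite{GRT18}. Given a non-negative $f:\XX\to\Reals$ with $\norm{f}_2/\norm{f}_1 \leq 2^{\ell'}$, I would decompose the normalized distribution $\bar f = f/(|\XX|\norm{f}_1)$ as a convex combination of flat distributions on the level sets $S_i = \{x : f(x) \in [2^i, 2^{i+1})\}$. The $\ell_2/\ell_1$-ratio bound forces almost all of the mass to live on level sets of density at least $2^{-\ell}$, so after discarding a negligible tail, $\bar f$ is essentially a mixture of flat distributions with min-entropy at least $\log|\XX|-\ell$. If more than $2^{-k}|\AA|$ rows $a$ had $|\inner{M_a,f}|/\norm{f}_1 > 2^{-r+O(1)}$, then taking $p_a$ uniform on a signed half of such rows together with $p_x = \bar f$ would contradict the $L_\infty$-extractor condition. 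This gives the desired $L_2$-extractor with $k' = \Omega(k)$, $\ell' = \Omega(\ell)$, and error $2^{-\Omega(r)}$.

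Plugging these parameters into Theorem~\ref{thm:TM-main1} with an appropriate choice of $\gamma$ yields $\rr = \Omega(\min\{k',\ell',r'\}) = \Omega(r)$, and the theorem then forbids any branching program of length $\leq 2^{\rr}$ and width $\leq 2^{ck'\ell'/\vareps} = 2^{\Omega(k\ell/\vareps)}$ from succeeding with probability more than $O(2^{-\rr}) = 2^{-\Omega(r)}$. Converting width $2^s$ back into memory $s$ gives exactly the corollary, including the exponentially small success probability bound. I expect the main obstacle to be the second step: one needs to track the constants in the level-set decomposition so that the losses in $r$ (from the threshold $2^{-r+O(1)}$) and in $\ell$ (from the restriction to sufficiently large level sets) are at worst constant factors, which the hypothesis that $k,\ell,r$ are large absorbs. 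Modulo this bookkeeping, the remainder is a direct invocation of Theorem~\ref{thm:TM-main1}.
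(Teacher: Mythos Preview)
Your proposal is correct and follows exactly the route the paper takes: the paper's proof is a one-line invocation of the equivalence (up to constant factors) between $L_\infty$-extractors and $L_2$-extractors established in~\cite{GRT18}, followed by an application of Theorem~\ref{thm:TM-main1}. Your outline simply unpacks that equivalence---flat-distribution reduction for the $L_\infty$ direction, level-set decomposition for the $L_2$ direction---rather than citing it, which is fine and matches the argument in~\cite{GRT18}.
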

Corollary follows from the equivalence between $L_2$-Extractors and $L_\infty$-Extractors (up to constant factors) observed in~\cite{GRT18}.

\section{Acknowledgements}
We would like to thank Avishay Tal and Greg Valiant for the helpful discussions.
\bibliographystyle{alpha}
\bibliography{biblo}
\end{document}